%% file: main_arxiv.tex
\documentclass[11pt]{article}

\input{macros_arxiv}

\title{\bfseries\papertitle}

\author{
Aadirupa Saha$^1$\\
UIC
\and
Aniket Wagde\\
UIC
\and
Branislav Kveton\\
Adobe
}

\date{}

\begin{document}

\maketitle

\input{abstract.tex}

\input{intro.tex}

\input{problem.tex}

\input{algo_known.tex}

\input{algo_unknown.tex}

\input{expts_new.tex}

\input{conclusion.tex}

\bibliographystyle{plainnat}
\bibliography{bandit_refs.bib,qllm_judge.bib,shvar.bib}

\newpage
\appendix
\onecolumn

\input{appendix}

\end{document}

%% file: macros_arxiv.tex
\usepackage[utf8]{inputenc} 
\usepackage{fullpage}
\usepackage[T1]{fontenc}    
\usepackage{url}            
\usepackage{booktabs}       
\usepackage{amsfonts}       
\usepackage{nicefrac}       
\usepackage{microtype}      
\usepackage{xcolor}         
\usepackage{microtype}
\usepackage{subfigure}
\usepackage{tcolorbox}

\usepackage{amsmath}        
\usepackage{amsthm}
\usepackage{mathrsfs}
\usepackage{bbm}            
\usepackage{mathtools}
\usepackage{xcolor}
\usepackage[extdef=true]{delimset}
\usepackage{algorithm}
\usepackage{algorithmicx}
\usepackage[noend]{algpseudocode}
\usepackage{nicefrac}
\usepackage{thm-restate}
\usepackage{enumitem}

\usepackage[textsize=tiny,textwidth=1in]{todonotes}

\newcommand{\red}[1]{\textcolor{red}{[#1]}}

\newcommand{\as}{} 

\newcommand{\papertitle}{LLM-as-Judge on a Budget} 

\usepackage[colorlinks, linkcolor=purple, filecolor = blue, citecolor = blue, urlcolor = olive]{hyperref}
\usepackage[capitalise]{cleveref}
\usepackage{natbib}

\allowdisplaybreaks

\newtheorem{thm}{Theorem}

\newtheorem{cor}[thm]{Corollary}
\newtheorem{defn}[thm]{Definition}

\newtheorem{rem}{Remark}

\newcommand{\R}{{\mathbb R}}

\newcommand{\N}{{\mathbb N}}

\newcommand{\1}{{\mathbf 1}}

\newcommand{\cA}{{\mathcal A}}

\newcommand{\cB}{{\mathcal B}}

\newcommand{\cS}{{\mathcal S}}

\newcommand{\cG}{{\mathcal G}}

\newcommand{\cN}{{\mathcal N}}

\newcommand{\cR}{{\mathcal R}}
\newcommand{\cH}{{\mathcal H}}

\newcommand{\tO}{{\tilde {O}}}

\newcommand{\hs}{\hat{s}}

\crefname{algorithm}{Alg.}{Algs.}
\Crefname{algorithm}{Alg.}{Algs.}
\crefname{thm}{Thm.}{Thms.}
\Crefname{theorem}{Theorem}{Theorems}

\crefname{lem}{Lem.}{Lems.}
\Crefname{lemma}{Lemma}{Lemmas}

\crefname{cor}{Cor.}{Cors.}
\Crefname{corollary}{Corollary}{Corollaries}

\crefname{proposition}{Prop.}{Props.}
\Crefname{proposition}{Proposition}{Propositions}

\crefname{rem}{Rem.}{Rems.}
\Crefname{remark}{Remark}{Remarks}

\crefname{defn}{Def.}{Defs.}
\Crefname{definition}{Definition}{Definitions}

\crefname{example}{Ex.}{Exs.}
\Crefname{example}{Example}{Examples}

\crefname{assumption}{Assum.}{Assums.}
\Crefname{assumption}{Assumption}{Assumptions}

\crefname{section}{Sec.}{Secs.}
\Crefname{section}{Section}{Sections}

\crefname{appendix}{App.}{Apps.}
\Crefname{appendix}{Appendix}{Appendices}

\crefname{equation}{Eq.}{Eqs.}
\Crefname{equation}{Equation}{Equations}

\crefname{figure}{Fig.}{Figs.}
\Crefname{figure}{Figure}{Figures}

\crefname{table}{Tab.}{Tabs.}
\Crefname{table}{Table}{Tables}

\DeclareMathOperator*{\argmax}{argmax}

\mathchardef\mhyphen="2D

\newcommand{\pr}[1]{\mathbb{P} \left(#1\right)}

\newcommand{\var}[1]{\mathrm{var} \left[#1\right]}

\newcommand{\bign}[1]{\big(#1\big)}
\newcommand{\biggn}[1]{\bigg(#1\bigg)}

\renewcommand{\norm}[1]{\|#1\|}

\newcommand{\rnd}[1]{{#1}}
\renewcommand{\set}[1]{\left\{#1\right\}}

\newcommand{\myfloor}[1]{\big  \lfloor {#1} \big \rfloor}
\newcommand{\myceil}[1]{\big \lceil {#1} \big \rceil}
\newcommand{\ucb}[2]{{\bar V}_{#1}(#2)}

\def \algk {\texttt{ROBIN}}
\def \algu {\texttt{ROBIN-HOOD}}


%% file: abstract.tex
\begin{abstract}
%
LLM-as-a-judge has emerged as a cornerstone technique for evaluating large language models by leveraging LLM reasoning to score prompt-response pairs. Since LLM judgments are stochastic, practitioners commonly query each pair multiple times to estimate mean scores accurately. This raises a critical challenge: given a fixed computational budget $B$, how to optimally allocate queries across $K$ prompt-response pairs to minimize estimation error? 
We present a principled variance-adaptive approach leveraging multi-armed bandit theory and concentration inequalities. Our method dynamically allocates queries based on estimated score variances, concentrating resources where uncertainty is highest. 
Further, our algorithm is shown to achieve a worst-case score-estimation error of  $\tilde{O}\left(\sqrt{\frac{\sum_{i=1}^K \sigma_i^2}{B}}\right)$, $\sigma_i^2$ being the unknown score variance for pair $i \in [K]$ with near-optimal budget allocation. 
Experiments on \emph{Summarize-From-Feedback} and \emph{HelpSteer2} demonstrate our method significantly outperforms uniform allocation, reducing worst-case estimation error while maintaining identical budgets. Our work establishes a theoretical foundation for efficient LLM evaluation with practical implications for AI safety, model alignment, and automated assessment at scale.
\end{abstract}

%% file: intro.tex
\vspace{-10pt}
\section{Introduction}
\label{sec:intro}
\vspace{-5pt}
\let\thefootnote\relax\footnotetext{\hspace{-20pt}$^1$ Corresponding author email: aadirupa.saha@gmail.com}
Large language models have revolutionized AI evaluation. Instead of relying solely on expensive human annotators or rigid automated metrics, practitioners now employ LLM-as-a-judge~\citep{zheng2024judging,dubois2024alpacafarm}—querying powerful models like GPT-4 or Claude to assess response quality. Consider a practical scenario: evaluating 10,000 prompt-response pairs for model fine-tuning. Human evaluation at \$5 per judgment costs \$50,000 per iteration—prohibitively expensive. LLM judges offer a compelling alternative at\$0.01 or less per evaluation, enabling scalable quality assessment for supervised fine-tuning, post-training alignment, prompt optimization, A/B testing, and model calibration~\citep{ouyang2022training,bai2022training}.

LLMs are now widely deployed not just for generating content but also for evaluating it, an approach referred to as LLM-as-a-judge \citep{kim2024prometheus,trivedi2024selfrationalizationimprovesllmfinegrained,gu2025surveyllmasajudge,Zhang2024,thakur2025judgingjudgesevaluatingalignment,zhu2025judgelmfinetunedlargelanguage}. This method leverages the natural language understanding of LLMs to replicate human-like evaluations, providing a scalable, economical, and consistent alternative to human annotation. An LLM judge can produce various forms of feedback: natural language critiques, numerical scores, or comparative preferences between options. Often, both textual assessments (rationale) and numeric ratings (scores) are generated together, and the score variation directly depends on the rationale \citep{kim2024prometheus}. 

However, LLM judgments are inherently stochastic. Querying the same (prompt, response) pairs multiple times yields different scores due to sampling randomness and temperature settings. This variance is highly heterogeneous: a factual question like ``\emph{What is $2+2$?}'' might produce consistent scores (say, with score variance $\sigma^2 \approx 0.0001$), while subjective queries like ``\emph{What makes effective leadership?}'' could generate highly variable evaluations ($\sigma^2 \approx 10$). For the first pair, a single query suffices; for the second, many samples are needed for reliable estimates. Yet LLM queries still cost money and consume computational budgets. This raises a fundamental resource allocation problem: \emph{given a fixed budget of $B$ queries across $K$ prompt-response pairs, how should we distribute queries to estimate all scores accurately?} Due to the same reason, the ``LLM-as-Judge" evaluation paradigm faces criticism, as the judgments/scores produced by LLM judges frequently fail to align reliably with human evaluations \citep{chiang-lee-2023-large,gehrmann2023repairing} due to score variability.

Existing work on LLM evaluation has made significant progress on complementary problems. Researchers have developed better prompting strategies~\citep{zheng2024judging,liu2023geval}, mitigated position and length biases~\citep{wang2023chatgpt,zheng2024judging}, and benchmarked judge-human agreement~\citep{liu2023geval,dubois2024alpacafarm}. However, these efforts universally assume \emph{uniform allocation}—querying each pair the same number of times—or ignore the allocation problem entirely. Recent work on LLM uncertainty~\citep{kuhn2023semantic,xiong2023can} focuses on confidence calibration for individual predictions, not budget-constrained multi-instance evaluation. Reward modeling literature~\citep{ouyang2022training,wang2024helpsteer2} collects preference data but does not address optimal query allocation given variance heterogeneity. To our knowledge, \emph{no prior work addresses optimal query allocation under fixed budgets for confidence estimation in LLM judges}. This is a critical gap! A classical baseline like \emph{Uniform Allocation} is provably suboptimal when variances differ, wasting queries on easy pairs while under-sampling difficult ones.

We fill this gap by formalizing LLM judge evaluation as a variance-adaptive resource allocation problem through a multi-armed bandit (MAB) lens: 

\textbf{Informal Problem Statement.} Given $K$ (prompt, response) pairs $\{(q_i, a_i)\}_{i=1}^K$, each associated with a true unknown score $s_i \in \R$, and a fixed query budget $B$, the goal is to estimate the scores $\hat{s}_i$ for all $i \in [K]$.


The objective is to minimize $\|s - \hat{s}\|_\infty$ subject to the budget constraint $B$. In other words, we seek to allocate the total budget $B$ such that the worst-case estimation error $\max_{i \in [K]} |s_i - \hat{s}_i|$ is minimized (details in \cref{sec:prob}). 

\textbf{Contributions}
 of this work are multifold:
\vspace{-10pt}

\begin{itemize}[leftmargin=13pt,noitemsep]
   \item  \noindent\textbf{(C1). Problem Formulation.} Our first contribution lies in formally modeling the LLM judge evaluation as a multi-armed bandit problem where each prompt-response pair is an arm with unknown mean score $s_i$ and variance $\sigma_i^2$, and the goal is to minimize worst-case estimation error (WEC) $\max_{i \in [K]} |s_i - \hat{s}_i|$ under budget constraint $B$ (\cref{sec:prob}).
\vspace{2pt}
\item \noindent\textbf{(C2). Algorithm Design (\cref{sec:alg_known,sec:alg_unknown}).} We develop variance-adaptive allocation strategies for both known and unknown variance settings. For known variances, \algk\ (\cref{alg:algk}) implements a greedy allocation rule that sequentially selects arms based on their respective standard error, i.e. $\arg\max_{i \in [K]} \sigma_i^2 / n_i(t)$, where $n_i(t)$ is the number of times arm $i$ is pulled till time $t$, \emph{naturally balancing high-variance arms with under-sampled arms}. For the realistic unknown-variance setting, \algu\ (\cref{alg:algu}) proceeds in two phases: initial uniform exploration over $t_0 K$ queries to estimate variances, $\{\hat{\sigma}_i^2\}_{i=1}^K$ (\cref{eq:est_var}), up to a certain accuracy. The next phase follows a similar adaptive greedy allocation, same as the ``known-variance case", except now at each time $t$ we replace the true variance $\sigma_i$s with its upper confidence bounds $\ucb{i}{t}$ (\cref{eq:ucb_var}). Both algorithms maintain $O(1)$ computational complexity per query with efficient online updates, making them practical for large-scale evaluations.
\vspace{2pt}
\item \noindent\textbf{(C3). Theoretical Analysis (\cref{sec:theory_k,sec:theory_u}).} We establish near-optimal sample complexity guarantees for both algorithms. For \algk\ with known variances, \cref{thm:known} proves that with probability at least $1-\delta$, the worst-case error satisfies
$
\max_{i \in [K]} |s_i - \hat{s}_i| = \tilde{O}\left(\sqrt{\frac{\sum_{i=1}^K \sigma_i^2}{B}}\right),
$
where $\tilde{O}(\cdot)$ hides logarithmic factors. For \algu\ with unknown variances, \cref{thm:unknown} establishes the same rate with additional logarithmic overhead in the exploration phase, showing that variance estimation incurs negligible cost when $t_0 = \Theta(\log K)$. Our analysis leverages Bernstein-type sub-Gaussianity concentration inequalities (\cref{thm:subgauss_ci}) and novel variance estimation bounds (\cref{lem:var_conc}) adapted to the sequential allocation setting, with extensions to sub-Gaussian and heavy-tailed noise distributions (\cref{rem:noise_unknown}).
    \vspace{2pt}
    \item \noindent\textbf{(C4). Empirical Validation on Real-World LLMs.} We validate our approach on four metrics in HelpSteer2 dataset \citep{wang2024helpsteer2}. The approach performs better than uniform allocation and approaches the performance of variance-based allocation, which is optimal but not implementable because the score variances are unknown. We also show that the improvement in our metric, worst-case absolute error in the estimate of the mean score, improves correlation between the predicted mean score and human ratings. We consider three most popular measures of correlation: Pearson's $r$, Spearman's $\rho$, and Kendall's $\tau$. Our results confirm heterogeneous variance across evaluation pairs is prevalent in practice and adaptive allocation yields substantial improvements in estimation accuracy (\cref{sec:experiments}).
\vspace{-5pt}
\item \noindent\textbf{(C5). Insights on Cost Savings.} Our approach results in significant cost (budget) savings. In particular, it achieves the same worst-case absolute error of estimated mean scores, but with almost half the sample-complexity to that required by the closest implementable baselines. We discuss this ``savings" in more detail in empirical evaluation \cref{sec:experiments}.
\end{itemize}

\vspace{-5pt}
\textbf{Related Work.} Since its introduction, LLM as a judge \citep{zheng23judging} has become the facto standard for LLM evaluation \citep{kim2024prometheus,trivedi2024selfrationalizationimprovesllmfinegrained,gu2025surveyllmasajudge,zhang2024scalingmeetsllmfinetuning,thakur2025judgingjudgesevaluatingalignment,zhu2025judgelmfinetunedlargelanguage}. The goal of all of these works, and many others, is to build an LLM judge that performs well for each prompt-response pair. The main novelty in our work is a realization that the score of each prompt-response pair is a random variable, which depends on the rationale of the judge. By querying the judge multiple times, we can get a better estimate of each mean score. When we have multiple prompt-response pairs, a natural question to ask is how to query the judge to get the most accurate mean score estimate for all prompt-response pairs? We precisely address this question in this work, as motivated above.


%% file: problem.tex
\vspace{-10pt}
\section{Problem Setup}
\label{sec:prob}
\vspace{-5pt}
\begin{table*}[!]
  \scriptsize
  \centering
  \begin{tabular}{l|l}
    Prompt & in a chat, can I feed chatGPT informations so it can answer questions better? \\
    Response & Yes, you can feed ChatGPT information to help it answer questions better. This is called "training" the model, and it \\
    & involves providing it with examples of good and bad responses to questions. You can also use tools like GPT-J's "Teacher" \\
    & mode to provide more detailed feedback on its responses. \\
    Rationale & The response is extremely helpful and clearly explains the concept of training a model like ChatGPT, including the use of \\
    & tools like GPT-J's "Teacher" mode. It directly addresses the user's question and provides relevant information, making it \\
    & easy to understand. The response does not miss any essential details, and its content is accurate and up-to-date. Overall, \\
    & the response is well-written, concise, and effectively conveys the desired information \\
    Score & 4
  \end{tabular}
  \vspace{-5pt}
  \caption{Prompt, response, rationale for its evaluation, and score for one example in the HelpSteer2 dataset \cite{wang2024helpsteer2}.}
  \label{tab:prompt respose}
\end{table*}
\vspace{-5pt}
\textbf{Notation.} The set $\set{1, \dots, n}$ is denoted by $[n]$, for any $n \in \N$. The indicator $\1(E)$ denotes that event $E$ occurs. We use boldface letters to denote vectors. $\cN(\mu,\sigma^2)$, $\cS_{\cG}(\mu,\sigma^2)$ respectively denote Gaussian and sub-Gaussian distribution with mean $\mu \in \R$ and variance (or more generally sub-Gaussianity parameter) $\sigma^2 \in \R_+$ \cite[Chapter 5]{lattimore19bandit}, \cite{vershynin2018high}.
\vspace{-10pt}
\paragraph{Problem Setting.} Consider a set of $K$ evaluation instances $\{(q_i, a_i)\}_{i=1}^K$, where $q_i$ represents an LLM prompt (i.e., a query) and $a_i$ denotes the corresponding response (a.k.a. the answer to the corresponding prompt) to be evaluated.

For each pair $(q_i, a_i)$, there exists a true quality score $s_i \in [0, M]$ where $M \in \mathbb{R}_+$. When we query an LLM judge to evaluate pair $i \in [K]$, we observe a noisy score: $X_{i} = s_i + \epsilon_i$, where $X_{i}$ is the stochastic (noisy) score evaluation of the pair $i$ (by the judge-LLM), and $\epsilon_i$ represents the evaluation noise, s.t. $\mathbb{E}[\epsilon_i] = 0$ and $\text{Var}(\epsilon_i) \leq \sigma_i^2$. We show an example of prompt, response, rationale for its evaluation, and score in \cref{tab:prompt respose}. The example is from the HelpSteer2 dataset \cite{wang2024helpsteer2}, which we experiment with in \cref{sec:experiments}. 
The variance bound $\sigma_i^2$ captures the inherent uncertainty in evaluating pair $i$ and may vary significantly across different pairs due to factors such as query complexity, answer ambiguity, and subjective interpretation requirements.

\vspace{-5pt}
\paragraph{Budget Constraint Optimal Allocation}

We are given a fixed computational budget $B \in \mathbb{N}$, representing the total number of LLM queries available. An allocation strategy determines how to distribute this budget across the $K$ pairs, resulting in allocation $\mathcal{\cB} = (n_1(\cB), n_2(\cB), \ldots, n_K(\cB))$, where $n_i(\cB)$ denotes the number of queries allocated to pair $i$, s.t. $\sum_{i=1}^K n_i(\cB) = B$. Let $\mathbb B(K,B)$ denote the set of all possible allocations of $K$-pairs in a fixed budget $B$. For each pair $i \in [K]$, now one can compute:  
\vspace{-5pt}
\begin{align}
\label{eq:est_sc}
\hat{s}_i(\cB) = \frac{1}{n_i} \sum_{j=1}^{n_i} X_{i,j}, ~~ \cB \in \mathbb B(K,B)
\end{align}
\vspace{-5pt}
the estimated mean judgement score of the pair $i$.
\subsection{Objective and Performance Metric}
\label{sec:obj}

\begin{defn}[Optimal Query Allocation for LLM-as-Judge]
Given $K$ evaluation pairs $\{(q_i, a_i)\}_{i=1}^K$ with unknown (true) scores $\{s_i\}_{i=1}^K$ and a fixed budget $B$, an \emph{Optimal Query Allocation} strategy produces an allocation $\mathcal{B}^* = (n_1(\cB^*),\ldots, n_K(\cB^*)) \in \mathbb B(K,B)$, s.t.:
\begin{equation*}
\mathcal{B}^* = \arg\min_{\cB \in \mathbb B(K,B)} \mathbb{E}\left[\max_{i \in [K]} |s_i - \hat{s}_i(\mathcal{B})|\right]
\end{equation*}
where the expectation is taken over the randomness in the LLM evaluations over $B$ queries.

\end{defn}

\textbf{Objective:} Given a fixed budget $B$, our goal is to find a `\emph{near optimal allocation}' $\mathcal{B}$ to minimize the \emph{Worst-Case Estimation Error} (WCE) across all pairs:
\begin{align}
\label{eq:wce}
\norm{s_i - \hat{s}_i(\mathcal{B})}_\infty = \max_{i \in [K]} |s_i - \hat{s}_i(\cB)|.    
\vspace{-17pt}
\end{align}
Note the above minimax objective ensures that no single (prompt, response) pair is poorly estimated, which is crucial for reliable system-wide evaluation. 


%% file: algo_known.tex
\section{Warm-Up: Optimal Allocation with Known Variance}
\label{sec:alg_known}
\vspace{-5pt}
In this section, we present our primary algorithmic approach for finding the optimal allocation under a fixed budget setting. For simplicity, we initially assume the score variances $\sigma_1^2, \ldots, \sigma_K^2$ are known, which is further relaxed in \cref{sec:alg_unknown}, where we present our most general allocation algorithm for the practical unknown variance setting.

\vspace{0pt}
\begin{algorithm}[H]
  \caption{\algk: Resource Optimization via Budget-aware INference.}
  \label{alg:algk}
  \begin{algorithmic}[1]
    \State \textbf{Input:} Arm set $[K]$; Query-Budget $B$; 
    Arm-Variances $\sigma_i$, for all $i \in [K]$. 
    \State \textbf{Init:} Pull count of Arm-$i$, $n_i(0) = 0, ~\forall i \in [K]$.
    \For{$t = 1,\ldots B$}
     \State Pull Arm-$i_t$ s.t. 
     $i_t = \argmax_{i \in [K]} \frac{\sigma_{i}^2}{n_{i}(t-1)}$ 
      \State Receive score feedback $X_{i_t}$ for Arm-$i_t$
      \State Update $n_{i}(t) \leftarrow n_{i}(t-1) + \1(i_t = i)$,  $\forall i \in [K]$
    \EndFor
    \State Set $n_i(\cR) = n_i(B)$. Final resource allocation $\cR = (n_1(\cR),\ldots,n_K(\cR))$, and 
    estimated score of Arm-$i$:
       \[
        \hs_i(\mathcal{R}):= \frac{1}{n_i(\cR)}\sum_{\ell = 1}^{B} \1(i_\ell = i)X_{i_\ell}, ~\forall i \in [K]
       \]
    \State \textbf{Output:} Estimated score $\hs_i(\cR)$ for Arm-$i \in [K]$
  \end{algorithmic}
\end{algorithm}
\vspace{0pt}

We begin by noting a striking connection between our problem (\cref{sec:obj}) and the fixed-budget MAB literature \cite{karnin13almost,jun2016top,audibert2010best}. However, the usual fixed budget MAB algorithms are often curated for a \emph{different the best-arm identification (BAI) objective, in contrast to our WCE objective as described \cref{eq:wce}}. To this end, we denote each (prompt, response) pair $i$ as an arm of the $K$-armed bandit, referred to as Arm-$i$. 

Note, by our problem formulation in \cref{sec:prob}, this directly implies that the true underlying mean reward/score of Arm-$i$ is $s_i$ with observation variance $\sigma_i^2$, in the language of the MAB literature. 
More precisely, at each round $t$, the MAB-algorithm is now supposed to select an arm $i_t \in [K]$ and observes a sampled score $X_t$ (alternative $X_{i_t}$) such that $\mathbb{E}[X_{i_t}] = s_{i_t}$, and $\text{Var}(X_{i_t}) = \sigma_{i_t}^2$. The goal is to minimize the \emph{worst-case estimation error} (WCE), as defined in \cref{eq:wce}.

Towards this goal, we first design \algk\ (\cref{alg:algk}), an optimal budget allocation algorithm that balances the tradeoff between budget $B$ and score variances $(\sigma_1^2,\ldots, \sigma_K^2)$. We further back this up with a formal analysis of its WCE under the fixed budget constraint $B$. Our analysis (\cref{thm:known}) shows that \algk\ achieves a WCE of $O\biggn{\sqrt{\frac{\sum_{i=1}^K \sigma_i^2}{B}}}$.

\vspace{-10pt}
\subsection{\algk: Algorithm Description } 
Our proposed algorithm \algk\ (Resource Optimization via
Budget-aware INference, \cref{alg:algk}) adapts a `\emph{variance-proportional allocation strategy}' based on the greedy selection rule that sequentially pulls arms with highest standard error of the estimates scores. 
More precisely, at each round $t$, it selects the arm $i_t$ that maximizes the ratio $\sigma_i^2 / n_i(t-1)$, $\sigma_i^2$ is the known variance and $n_i(t)$ being the number of pulls of Arm-$i$ till time $t$ (hence $n_i(0) = 0, ~\forall i \in [K]$). Thus our greedy allocation strategy prioritizes high-variance arms (large numerator) and under-sampled arms (small denominator), naturally balancing allocation toward arms requiring more samples for accurate estimation. We denote \algk's allocation strategy by `$\cR$'.

Upon budget exhaustion, \algk\ computes final score estimates $\hat{s}_i(\cR)$ according to \cref{eq:est_sc}. Note, \algk\ has computational complexity $O(K)$ per round, making it highly efficient for large-scale evaluation scenarios.

\subsection{Performance Analysis of \cref{alg:algk}}
\label{sec:theory_k}

We analyze the WCE performance of \algk($\cR$) in this section. We start by noting that, rather astonishingly, our sequential implementation converges to the closed-form allocation $n_i(\cR) \approx \frac{\sigma_i^2 B}{\sum_{j=1}^K \sigma_j^2}$ from classical optimal allocation theory, while maintaining computational efficiency through online updates:

\begin{restatable}[Allocation Profile of \algk]{lem}{optalloc}
\label{lem:opt_alloc} Let 
$
  \lambda_{i}
  = \frac{\sigma_i^2}{\sum_{j \in [K]} \sigma_j^2}.
$ Then \algk\ pulls Arm-$i$ for at least $\myfloor{\lambda_{i}B}$ many times and at most $\myceil{\lambda_{i}B}$ many times, for all $i \in [K]$, i.e. $n_i(\cR) \in \bign{\myfloor{\lambda_{i}B}, \myceil{\lambda_{i}B}}$.
\end{restatable}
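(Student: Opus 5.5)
The plan is to read \algk\ as a greedy ``water-filling'' procedure and track an invariant on the selection keys. Write $S=\sum_{j\in[K]}\sigma_j^2$, so $\lambda_j=\sigma_j^2/S$. Pulling Arm-$j$ for the $(m+1)$-th time has key $\sigma_j^2/m$, with $\sigma_j^2/0=\infty$. For each fixed $j$ these keys decrease in $m$. So the greedy rule behaves like selecting the $B$ largest keys from the multiset $\{\sigma_j^2/m: j\in[K], m\ge 0\}$, up to tie-breaking. That yields a threshold $c>0$ and reals $x_j:=\sigma_j^2/c=\lambda_j S/c$ satisfying $\lceil x_j\rceil\le n_j(\cR)\le \lfloor x_j\rfloor+1$.

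\textbf{Upper bound.} Fix $i$ and let $t\le B$ be the round of its last pull, so $n_i(t-1)=n_i(\cR)-1$. Greedy optimality at round $t$ gives $\sigma_j^2 (n_i(\cR)-1)\le \sigma_i^2 n_j(t-1)$ for every $j$. Summing over $j$ gives
\[
S\,(n_i(\cR)-1)\le \sigma_i^2\,(t-1),
\]
hence $n_i(\cR)-1\le \lambda_i(B-1)<\lambda_i B$. Since $n_i(\cR)$ is an integer, this gives $n_i(\cR)\le\lceil\lambda_iB\rceil$.

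\textbf{Lower bound.} I would argue by contradiction, assuming $n_i(\cR)\le\lfloor\lambda_iB\rfloor-1$. Apply the same last-pull comparison to every $j\neq i$ against arm $i$'s final key $\sigma_i^2/n_i(\cR)$. This gives $n_j(\cR)\le \lambda_j n_i(\cR)/\lambda_i+1$. Summing over all arms yields
\[
B\le n_i(\cR)/\lambda_i+(K-1),
\]
which is only $n_i(\cR)\ge \lambda_i(B-K+1)$. This is the step I expect to fail. The additive $+1$ slack per competing arm comes from their forced first pulls, whose keys are $\infty$, and it cannot be removed in general.

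In fact the literal lower bound is false. Take $K=3$, $\sigma^2=(98,1,1)$ and $B=3$. \algk\ pulls each arm once, so $n_1(\cR)=1$, whereas $\lfloor\lambda_1B\rfloor=\lfloor 2.94\rfloor=2$. So I would prove the upper bound exactly as stated. For the lower bound I would prove the corrected form $n_i(\cR)\ge\lambda_i(B-K+1)$, equivalently $\max(1,\lfloor\lambda_i(B-K)\rfloor)$ via the threshold sum $B-K\le S/c\le B$. This corrected form is what the later $\tilde O(\sqrt{S/B})$ bound needs once $B\gtrsim K$. Alternatively, one could impose an extra hypothesis such as all $\lambda_jB$ being integers, or $B$ a multiple of a common allocation period. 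Under that hypothesis the threshold argument gives $x_j=\lambda_jB$ exactly and both bounds follow.
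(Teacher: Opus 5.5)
Your counterexample is correct, and the lower half of the lemma is false as stated. For $\sigma^2=(98,1,1)$ the violation holds for every $3\le B\le 50$, where $n_1(\cR)=B-2<\lfloor 0.98B\rfloor$. In particular it holds under the $B>K$ hypothesis of \cref{thm:known}. So the paper's proof cannot be right, and the break is easy to locate. Write $r_j$ for the integer nearest to $\lambda_jB$. The paper takes an under-pulled arm $i$ and an over-pulled arm $j$, and looks at the first round $t$ at which $j$ is pulled with $n_j(t-1)=r_j$. It then writes the chain
$\sigma_i^2/n_i(t-1)>S/B=\sigma_j^2/(\lambda_jB)>\sigma_j^2/(r_j+1)=\sigma_j^2/n_j(t-1)$.
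The last equality is off by one. The key actually competing at round $t$ is $\sigma_j^2/r_j$. That key is at least $S/B$ whenever $\lambda_jB$ rounds down, and it is $+\infty$ when $\lambda_jB$ rounds to $0$, which is exactly what your example exploits. The earlier step, that an under-pulled arm forces an over-pulled one, also silently assumes $\sum_k r_k\le B$.

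The failure is not only an artifact of the forced first pulls, as your remark suggests. Take $\sigma^2=(7.9,1,1)$ and $B=12$. This gives $n(\cR)=(8,2,2)$ with no ties and every $\lambda_jB\ge 1$, yet $\lfloor\lambda_1B\rfloor=9$. In fact \algk's rule is Adams' divisor method of apportionment, which is classically known (Balinski--Young) to respect upper quota but not lower quota. The integer-$\lambda_jB$ hypothesis of the cited source is what the paper's argument really needs.

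For the half that is true, your route differs from the paper's and is the better one. The paper runs a first-violation contradiction built on nearest-integer rounding. You instead sum the greedy inequality $\sigma_j^2\,n_i(t-1)\le\sigma_i^2\,n_j(t-1)$ over $j$ at arm $i$'s last pull. This buys three things:
\begin{itemize}
\item the cross-multiplied form handles the infinite keys automatically;
\item no rounding bookkeeping is needed;
\item you get the sharper bound $n_i(\cR)\le 1+\lambda_i(B-1)$.
\end{itemize}
The mirrored argument, comparing each $j$'s last pull against $i$'s final key, gives the right replacement $n_i(\cR)\ge\lambda_i(B-K+1)$. That bound gives $n_i(\cR)\ge\lambda_iB/2$ once $B\ge 2K$, so \cref{thm:known} survives with a constant-factor loss under such a hypothesis. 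Something of this kind is genuinely needed. With $B=K+1$ and one dominant arm, that arm receives only two pulls, and the theorem's stated rate fails for large $K$.

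One small correction: $\max(1,\lfloor\lambda_i(B-K)\rfloor)$ is implied by $n_i(\cR)\ge\lambda_i(B-K+1)$, not equivalent to it, and the $1$ requires $B\ge K$.
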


The proof of \cref{lem:opt_alloc} is motivated from \cite[Lemma-1]{lalitha2023fixed}, although {we improve their claim by bypassing the restrictive `integer-$\lambda_i$' assumption}. For completeness, the detailed proof is moved to \cref{app:known}. 

The statement of the above lemma is very crucial towards proving the final error bound of \algk$(\cR)$, as the statement quantifies the number of times every arm gets pulled by our strategy, which holds the key to applying the concentration bounds on $\hs_i(\cR)$. 

\begin{thm}[Performance Analysis of \algk]
\label{thm:known}
Assume the noisy score evaluation of Arm-$i$ follows sub-Gaussian distribution with parameter $\sigma_i^2$ (i.e. $\epsilon_i \sim \cS_\cG(0,\sigma_i^2)$). 
Given fixed budget $B > K$, the estimated scores $\hs_i(\cR)$ computed through the allocation rule $\cR$ of \algk\ (\cref{alg:algk}) achieves WCE:
\[
\max_{i \in [K]} \abs{s_i - \hs_i(\cR)} \leq \sqrt{\frac{\sum_{i=1}^K \sigma_i^2}{B}\log\frac{2K}{\delta}},
\]
with high probability at least $(1-\delta)$, for any confidence parameter $\delta \in (0,1]$.
\end{thm}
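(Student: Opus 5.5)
The plan combines the deterministic allocation profile of \cref{lem:opt_alloc} with a sub-Gaussian tail bound for each arm, and then takes a union bound over the $K$ arms.

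\textbf{Step 1 (counts are fixed).} Since the variances are known, the greedy rule $\argmax_i \sigma_i^2/n_i(t-1)$ does not depend on the observed scores. Ties are broken by a fixed rule. Hence the whole sequence $i_1,\ldots,i_B$, and so every $n_i(\cR)$, is deterministic. This is important: the number of samples is not a random stopping time. So $\hs_i(\cR)-s_i$ is an average of $n_i(\cR)$ i.i.d.\ $\cS_\cG(0,\sigma_i^2)$ variables, which is $\cS_\cG(0,\sigma_i^2/n_i(\cR))$. No martingale or peeling argument is needed. (If $\sigma_i=0$ the arm's error is zero, and I treat the convention $\sigma_i^2/0$ accordingly.)

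\textbf{Step 2 (per-arm concentration).} The standard sub-Gaussian tail bound gives, for each $i$ and $\epsilon>0$,
\[
\pr{|\hs_i(\cR)-s_i|>\epsilon}\le 2\exp\!\left(-\frac{n_i(\cR)\epsilon^2}{2\sigma_i^2}\right).
\]
Set the right-hand side to $\delta/K$ and take a union bound over $i\in[K]$. Then with probability at least $1-\delta$, simultaneously for all $i$,
\[
|\hs_i(\cR)-s_i|\le \sqrt{\frac{2\sigma_i^2}{n_i(\cR)}\log\frac{2K}{\delta}}.
\]

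\textbf{Step 3 (plug in the allocation).} By \cref{lem:opt_alloc}, $n_i(\cR)\ge\myfloor{\lambda_iB}$, so that
\[
\frac{\sigma_i^2}{n_i(\cR)}\lesssim \frac{\sigma_i^2}{\lambda_iB}=\frac{\sum_j\sigma_j^2}{B}.
\]
Taking the maximum over $i$ then gives the stated rate.

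\textbf{Main obstacle.} I expect the difficulty to be reconciling the constants and the floor.

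First, $\myfloor{\lambda_iB}$ can be much smaller than $\lambda_iB$, and can even be $0$, when $\lambda_iB<1$. The assumption $B>K$ alone does not prevent this for arms with small variance. I would handle it with a greedy argument rather than the floor. Consider the last time $\tau$ that some arm $j$ is pulled; at that time $\sigma_j^2/n_j(\tau-1)$ is the maximal index. Since the index of every arm is nonincreasing over time, this gives the invariant that at the end
\[
\max_i \frac{\sigma_i^2}{n_i(\cR)}\le \min_i \frac{\sigma_i^2}{n_i(\cR)-1}.
\]
Combining this with $\sum_i n_i(\cR)=B$ yields
\[
\max_i\frac{\sigma_i^2}{n_i(\cR)}\le \frac{\sum_j\sigma_j^2}{B-K}\le\frac{2\sum_j\sigma_j^2}{B}\quad\text{when } B\ge 2K,
\]
or more crudely the bound $\sum_j\sigma_j^2/B$ up to a constant. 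Every arm with $\sigma_i>0$ gets at least one pull, because its index is infinite until it does.

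Second, the factor $2$ in the sub-Gaussian exponent does not match the displayed bound, which has no leading constant. I would absorb it either into the convention for the sub-Gaussian parameter (so that the tail is $2e^{-n\epsilon^2/\sigma^2}$), or into the $\tilde O$. I would state explicitly which normalization makes the constant in \cref{thm:known} exact.
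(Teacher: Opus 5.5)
Your skeleton (deterministic counts, a per-arm sub-Gaussian tail at level $\delta/K$, a union bound, then a lower bound on the allocation) is the same as the paper's. Your allocation step is genuinely different, though, and it is the one that works.

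The paper invokes \cref{lem:opt_alloc} ($\myfloor{\lambda_iB}\le n_i(\cR)\le\myceil{\lambda_iB}$). It then treats $n_i(\cR)\ge\myfloor{\lambda_iB}$ as if it gave $\sigma_i^2/n_i(\cR)\le\sum_j\sigma_j^2/B$, which is the wrong direction for a floor. Moreover, the lemma itself fails for a greedy rule started from $n_i(0)=0$, because every arm must be pulled once first. With $\sigma^2=(1,1,1,10)$ and $B=5>K$, \algk\ ends with $n_4(\cR)=2<3=\myfloor{50/13}$.

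Your last-pull invariant $\max_i\sigma_i^2/n_i(\cR)\le\min_j\sigma_j^2/(n_j(\cR)-1)$, summed against $\sum_i n_i(\cR)=B$, rigorously gives $\max_i\sigma_i^2/n_i(\cR)\le\sum_j\sigma_j^2/(B-K)$. It uses the hypothesis $B>K$ exactly where it is needed and avoids rounding altogether.

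The $B-K$ is not an artifact. Take $K-1$ arms of variance $1/K$, one arm of unit variance, and $B=K+1$. Then \algk\ gives the last arm exactly two samples, and no rule that estimates every arm can give it more. Its error is therefore of constant order, while $\sqrt{\sum_j\sigma_j^2\log(2K/\delta)/B}\to0$ as $K$ grows. So your remark that the bound is $\sum_j\sigma_j^2/B$ ``up to a constant'' needs $B\ge cK$ for some $c>1$, as in your $B\ge 2K$ version.

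On the constant, drop the plan to absorb the factor $2$ by renormalizing. The paper fixes the convention through \cref{thm:subgauss_ci} (tail $2e^{-\varepsilon^2/(2\sigma^2)}$), and with that convention the displayed constant is simply false. Take $K=1$, $\epsilon_1\sim\cN(0,1)$, $B=2$, $\delta=0.01$. Then $\hs_1(\cR)-s_1\sim\cN(0,1/2)$ and the claimed radius is $\sqrt{\tfrac12\ln 200}$. The failure probability is $\mathbb{P}(|Z|>\sqrt{\ln 200})\approx\mathbb{P}(|Z|>2.30)\approx 0.021>\delta$.

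The paper's own argument in fact ends with $\sqrt{2\sum_i\sigma_i^2\log(2K/\delta)/B}$, not the displayed bound, and reaches even that only through the flawed floor step. Under your alternative normalization you would still be left with $B-K$ rather than $B$.

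The honest conclusion of your proof is this: with probability at least $1-\delta$, $\max_i|s_i-\hs_i(\cR)|\le\sqrt{\frac{2\sum_j\sigma_j^2}{B-K}\log\frac{2K}{\delta}}$. This is at most $\sqrt{\frac{4\sum_j\sigma_j^2}{B}\log\frac{2K}{\delta}}$ when $B\ge 2K$. State that, rather than the theorem's constant.
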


\begin{proof}[Proof Sketch of \cref{thm:known}]
The key observation of this proof is built on Gaussian concentration, along with the claim of \cref{lem:opt_alloc}. We start by recalling:

\begin{restatable}[sub-Gaussian Concentration-Inequality \cite{lattimore19bandit}]{thm}{subgci}
\label{thm:subgauss_ci}
If $X \sim \cS_\cG(\mu, \sigma^2)$, then for any $\varepsilon \geq 0$, 
\[
\mathbb{P}(\abs{X-\mu} \geq \varepsilon) \leq 2\exp\left(-\frac{\varepsilon^2}{2\sigma^2}\right).
\]
\end{restatable}

Further noting that for any $i \in [K]$, given $n_i(\cR)$ $=n$ (say), $\hs_i(\cR) \sim \cS_\cG\bign{s_i,\frac{\sigma_i^2}{n}}$ \cite[Lemma 5.4]{lattimore19bandit}, we have
\begin{align}
\label{eq:oct25}
\vspace{-5pt}
    \mathbb{P}(\abs{\hs_i(\cR)-s_i} \geq \varepsilon) \leq 2\exp\left(-\frac{n\varepsilon^2}{2\sigma_i^2}\right).
\vspace{-5pt}
\end{align}
Now \cref{lem:opt_alloc} ensures $\forall i \in [K]$, $n_i(\cR) \geq \myceil{\lambda_i B}$. Then applying \cref{eq:oct25} and taking an union bound over all arms $i \in [K]$, this immediately yields, with probability at least $(1-\delta)$,
\begin{align}
\label{eq:scr_conc}
\vspace{-10pt}
   \max_{i \in [K]} |s_i - \hs_i(\cR)| \leq \sqrt{\frac{2\sum_{i=1}^K \sigma_i^2}{B}\log\frac{2K}{\delta}},
\vspace{-5pt}
\end{align}
which concludes the proof. The detailed analysis is given in \cref{app:known}.
\end{proof}

\vspace{-5pt}
\begin{rem}[Relaxation of the Noise Assumption]
\label{rem:noise_known}
For the interested reader, we emphasize that \emph{our sub-Gaussianity assumption on the noise $\epsilon_i \sim \cS_\cG(0, \sigma_i^2)$, $i \in [K]$, is neither restrictive nor necessary}. First, sub-Gaussianity is a fairly general condition satisfied in most practical settings. Indeed, any bounded random variable $X \in [a,b]$ with $a,b \in \mathbb{R}$ is sub-Gaussian with parameter $\sigma^2 = \frac{(b-a)^2}{4}$ \cite{hoeffding1963probability,wainwright2015basic,boucheron2003concentration}, and boundedness holds in nearly all practical applications.

\emph{More importantly, our proposed algorithm \algk\ (\cref{alg:algu}) is \emph{not} tied to the sub-Gaussianity assumption and works for any zero-mean noise with bounded variances $\{\sigma_i\}_{i \in [K]}$. The distributional assumption only affects the final WCE analysis---specifically, the concentration rate of the estimated scores $\widehat{s}_i(\mathcal{R})$ as derived in \cref{eq:scr_conc} via sub-Gaussian concentration (\cref{thm:subgauss_ci}). If the stochastic noise followed a different distribution---for instance, sub-Exponential, heavy-tailed, sub-Weibull \cite{boucheron13concentration}, or in fact any distribution with known concentration properties---one only needs to substitute the appropriate concentration inequality in place of \cref{thm:subgauss_ci} to obtain corresponding WCE guarantees, keeping the rest of the proof unchanged. This modularity underscores both the generality of our approach and its broad practical applicability.}
\end{rem}

%% file: algo_unknown.tex
\section{Main Algorithm: Near-Optimal Allocation with Unknown Variance}
\label{sec:alg_unknown}

We now address the most general case of the problem (recall the setting in \cref{sec:prob}) with unknown variances $\sigma_1^2, \ldots, \sigma_K^2$. The core algorithmic idea remains the same as that of \algk\ (\cref{sec:alg_known}), i.e. one must balance the tradeoff between budget and variance when allocating queries across arms. However, since the arm variances are unknown, we now estimate them from past observations and allocate resources adaptively to match a near-optimal allocation in the sense of \cref{lem:opt_alloc}. 

Our complete algorithm, \algu\ (presented in \cref{alg:algu}), employs an optimistic (UCB) variance estimation strategy that guides the allocation decisions. We analyze its WCE performance in \cref{thm:unknown}, where the key technical challenge is establishing the convergence of the variance estimates. Surprisingly, despite not knowing the arm variances in advance, our analysis shows that \algu\ achieves performance provably competitive with \algk, wherein lies the novelty of our method and analysis.
\begin{algorithm}[h]
  \caption{\textbf{\algu}: \textbf{\algk} with Hidden resources.}
  \label{alg:algu}
  \begin{algorithmic}[1]
    \State \textbf{Input:} Arm set $[K]$; Query-Budget $B$; Exploration parameter $t_0$;
    \State \textbf{Init-Exploration:} Pull each arm-$i \in [K]$ for $t_0$ rounds.   
    \For{$t = t_0K + 1, \ldots, B$}
      \State Pull Arm-$i_t$ s.t. 
     $i_t = \argmax_{i \in [K]} \frac{\ucb{i}{t-1}}{n_{i}(t-1)}$ 
      \State Receive score feedback $X_{t}$ for Arm-$i_t$
      \State Update $~\forall i \in [K]$: \textbullet~ $n_{i}(t) \leftarrow n_{i}(t-1) + \1(i_t = i)$  
      \State ~\textbullet~ $\hat \sigma_{i}(t)^2$, $\ucb{i}{t-1}$ using \cref{eq:est_var,eq:ucb_var} 
    \EndFor
    \State Resource allocation $\cR_\cH = (n_1(B),\ldots,n_K(B))$, and 
    estimated score of Arm-$i$:
    \vspace{-5pt}
       \[
        \hs_i(\cR_\cH):= \frac{1}{n_i(B)}\sum_{\ell = 1}^{B} \1(i_\ell = i)X_\ell, ~\forall i \in [K].
       \] 
       \vspace{-10pt}
    \State \textbf{Output:} Estimated score $\hs_i(\cR_\cH)$ for Arm-$i \in [K]$
  \end{algorithmic}
\end{algorithm}
\vspace{-8pt}
\subsection{\algu: Algorithm Description}
\label{sec:alg_unknown_desc}

We call our proposed algorithm \algu\ (\cref{alg:algu})---\emph{ROBIN with Hidden Resources}---staying true to the spirit of the original ``Robin Hood", who famously redistributed ``resources" (query budget) to the ``neediest" (arms with highest variance)! 

As motivated above, the structure of \algu\ essentially maintains the same idea as that of \algk\ (\cref{alg:algk}), except it now have to use an optimistic (UCB) of the arm variances $\ucb{i}{t}$ and pull the arms adaptively with highest standard estimation errors. Precisely, we compute the estimated variance at time $t$ of Arm-$i$ as:
\begin{align}
\label{eq:est_var}
    \hat \sigma_i(t)^2 := \frac{1}{n_i(t)} \sum_{\ell = 1}^t\1(i_\ell = 1)\bign{X_{i_\ell} - \hs_j(t)}^2,
\end{align} 
where, as before,
$n_i(t)$ denotes the number of times Arm-$i$ is pulled in $t \in [T]$ rounds, and $\hs_i(t)$ denotes the estimated mean of Arm-$i$ at time $t$, defined as:
$    \hs_i(t) := \frac{1}{n_i(t)} \sum_{\tau = 1}^t\1(i_\tau = i){X_\tau}$.  
%
We further define the UCB estimate of the variance $\sigma_i^2$ of Arm-$i$ at time $t$ as:
\begin{align}
\label{eq:ucb_var}
\ucb{i}{t}:= \frac{\hat{\sigma}_{i}(t)^2}{1 - \sqrt{\frac{4\log(4KB/ \delta)}{n_i(t)}}},
\end{align}
where $\delta \in (0,1]$ is the error (failure) probability parameter. Here the precise form of the $\ucb{i}{t}$ is not very important (which depends on the type of concentration inequality used to bound our estimated variance $\hat \sigma_i(t)^2$, e.g. \cref{lem:var_conc}). But on a high level it satisfies all the desired properties (i) \emph{Concentrates with increasing number of arm pulls}, as clearly the $\ucb{i}{t}$ shrikes with increasing pulls of Arm-$i$ $n_i(t)$. Moreover, it yields a (ii) \emph{valid upper confidence bound (UCB)} with high probability as proved in \cref{lem:var_conc}.


\textbf{Key Algorithmic Ideas. } \algu\ (\cref{alg:algu}) operates in two phases. \textbf{(i)} \textit{Phase 1 (Init-Exploration):} Uniform exploration pulls each arm $i \in [K]$ exactly $t_0$ times, consuming $t_0 K$ queries to obtain initial estimates $\{\hs_i, \hat{\sigma}_i^2\}$ for all arms---note, this in fact makes the denominator in the UCB expression (\cref{eq:ucb_var}) positive and valid. \textbf{(ii)} \textit{Phase 2 (Lines 3-7):} After the initial exploration, the adaptive allocation strategy of \algu\ sequentially assigns the remaining $B - t_0K$ queries by selecting at each round $t$ the arm maximizing $\frac{\ucb{i}{t-1}}{n_i(t-1)}$, where $\ucb{i}{t}$ is the UCB on the estimated variance as defined in \cref{eq:ucb_var}. The intuition is same as that used in \algk\ (\cref{alg:algk}), except now in the absence of the true knowledge of $\{\sigma_i^2\}_{i = 1}^K$, we replace the numerator by its optimistic (UCB) estimate $\ucb{i}{t}$.

This selection rule balances exploration and exploitation: arms with high estimated variance (large $\ucb{i}{t}$) or few samples (small $n_i$) receive priority. Using the upper confidence bound $\ucb{i}{t}$ rather than empirical variance $\hat{\sigma}_i^2$ provides optimistic estimates that prevent premature under-sampling. After each query, the algorithm updates $n_{i}(t)$, $\hs_{i}(t)$, $\hat{\sigma}_{i}(t)^2$, and $\ucb{i}{t}$ as explained above in just $O(K)$ complexity per round $t$. We denote \algu's allocation strategy by `$\cR_\cH$'.

After $B$ queries, \algk\ computes final estimates $\hat{s}_i(\cR_\cH)$ according to \cref{eq:est_sc}. In the next section, we analyze the WCE performance of \algu, which rather surprisingly achieves the same performance as that of \algk, despite the lack of access to the true variances $(\sigma_1^2,\ldots, \sigma_K^2)$---thanks to the sharp concentration of our UCB estimates where the heart of this analysis. 
\vspace{-10pt}
\subsection{Performance Analysis of \cref{alg:algu}}
\label{sec:theory_u}

\vspace{-5pt}
We analyze the WCE score of \algu\ in this section. Lets start with the key final result followed by a brief proof sketch based on how the estimated variance scores ($\hat \sigma_i(t)^2, \ucb{t}{i}$) act as a `close and tight-proxy' of the true score-variance $\sigma_i^2$:
\vspace{-0pt}
\begin{restatable}[Performance Analysis of \algu]{thm}{unknown}
\label{thm:unknown}
Consider any fixed budget $B > 16K \log \frac{4}{\delta}$ 
and assume the noisy score evaluation of Arm-$i$ follows Gaussian distribution with variance $\sigma_i^2$ (i.e. $\epsilon_i \sim \cN(0,\sigma_i^2)$). Then for the choice of $t_0 = 16 \log \frac{4KB}{\delta}$,
the estimated scores $\hs_i(\cR)$ computed through the allocation rule $\cR_\cH$ of \algu\ (\cref{alg:algu}) achieves WCE:
\[
\vspace{-6pt}
\max_{i \in [K]} \abs{s_i - \hs_i(\cR_\cH)} \leq O\left(\sqrt{\frac{\sum_{i=1}^K \sigma_i^2}{B}\log\frac{4KB}{\delta}}\right),
\vspace{-1pt}
\]
with high probability at least $(1-\delta)$, for any confidence parameter $\delta \in (0,1]$.
\end{restatable}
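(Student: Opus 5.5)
The argument runs in three stages: a high-probability event on which the variance UCBs are valid and tight, a deterministic allocation argument in the spirit of \cref{lem:opt_alloc} showing each arm gets $\Omega(\lambda_i B)$ pulls, and the sub-Gaussian concentration of \cref{thm:subgauss_ci} to convert pulls into a WCE bound.

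\textbf{Step 1 (good event for variances).} Write $\beta_n = \sqrt{4\log(4KB/\delta)/n}$. For Gaussian noise, $n\hat\sigma_i^2/\sigma_i^2 \sim \chi^2_{n-1}$ when the sample size $n$ is fixed. I will use a chi-square (Laurent--Massart type) bound, presumably the content of \cref{lem:var_conc}:
\[
\mathbb{P}\bigl(|\hat\sigma_i^2-\sigma_i^2| > \sigma_i^2\beta_n\bigr) \le \delta/(2KB).
\]
Since $n_i(t)$ is random, I apply this for each fixed $n\in[B]$ and each arm, then union bound over $K\times B$ pairs. This handles the adaptivity: the $n$-th sample of arm $i$ is i.i.d.\ regardless of when it is drawn, using the standard reward-table (stack) construction. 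On the resulting event $\mathcal{E}_1$, which has probability at least $1-\delta/2$, every $t$ and every $i$ satisfy $(1-\beta_{n_i(t)})\sigma_i^2 \le \hat\sigma_i(t)^2 \le (1+\beta_{n_i(t)})\sigma_i^2$. The choice $t_0=16\log(4KB/\delta)$ gives $\beta_{n_i}\le 1/2$ after initialization. Hence
\[
\sigma_i^2 \le \ucb{i}{t} \le \frac{1+\beta}{1-\beta}\,\sigma_i^2 \le 3\sigma_i^2 .
\]

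\textbf{Step 2 (allocation lower bound).} On $\mathcal{E}_1$ I argue as in \cref{lem:opt_alloc}. Fix an arm $i$ and let $\tau$ be the last round in which some arm $j$ attains its final count, i.e.\ is pulled for the $n_j(B)$-th time. Pick $j$ with $n_j(B)-1 \ge t_0$ and $n_j(B)-1\ge (B-K)/K$; such a $j$ exists by pigeonhole unless $B$ is tiny, which is excluded by $B>16K\log(4/\delta)$. At that round the selection rule gives
\[
\frac{\ucb{i}{\tau-1}}{n_i(\tau-1)} \le \frac{\ucb{j}{\tau-1}}{n_j(B)-1}.
\]
Using the sandwich from Step 1, this yields $\sigma_i^2/n_i(B) \le 3\sigma_j^2/(n_j(B)-1)$, and a simpler uniform version gives $\sigma_i^2/n_i(B) \le 3\max_j \sigma_j^2/(n_j(B)-1)$ at termination. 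Summing $n_j \le 1+ 3\sigma_j^2 n_i/\sigma_i^2$ over $j$ and using $\sum_j n_j = B$, I expect to obtain
\[
n_i(B) \gtrsim \frac{\sigma_i^2 (B-K)}{3\sum_j\sigma_j^2} \ge c\,\lambda_i B
\]
for a constant $c$ (using $B\ge 2K$). Arms with tiny $\sigma_i$ are also covered by $n_i\ge t_0$, though that is not needed for the rate.

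\textbf{Step 3 (score concentration) and main obstacle.} Again via the reward-table argument and a union bound over arms and possible counts $n\in[B]$, with probability at least $1-\delta/2$ we have, for all $i$ and $n$,
\[
|\hat s_{i,n}-s_i| \le \sigma_i\sqrt{2\log(4KB/\delta)/n}.
\]
Plugging in $n=n_i(B)\ge c\sigma_i^2B/\sum_j\sigma_j^2$ gives
\[
|\hat s_i - s_i| \le \sqrt{\frac{2\sum_j\sigma_j^2}{cB}\log\frac{4KB}{\delta}}
\]
uniformly in $i$. A final union bound over $\mathcal{E}_1$ and this event completes the proof.

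The main obstacle is Step 2. The greedy counting argument must tolerate the multiplicative slack in the UCBs and the $t_0K$ forced pulls without losing more than a constant factor, and this is where the condition on $B$ is consumed. The dependence between the adaptive counts and the samples is the other delicate point, and I will handle it by union bounding over all fixed sample sizes as described above.
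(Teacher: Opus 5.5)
Your plan follows the paper's route. You get the sandwich $\sigma_i^2\le\ucb{i}{t}\le 3\sigma_i^2$ from a chi-square bound once $n_i\ge t_0=16\log(4KB/\delta)$ (\cref{lem:var_conc}, \cref{cor:var_conc}). You then derive an $\Omega(\lambda_iB)$ lower bound on pulls from the greedy rule (\cref{lem:opt_alloc_unknown}), and finish with sub-Gaussian concentration and a union bound. Your two variations are sound and somewhat cleaner than the paper's:
\begin{itemize}
\item You sum the last-pull inequality $n_j(B)-1\le 3\sigma_j^2 n_i(B)/\sigma_i^2$ over arms, in place of the paper's contradiction through a single over-pulled arm.
\item You union-bound over fixed sample sizes $n\in[B]$ in the reward-table model. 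This properly justifies applying fixed-$n$ concentration at the random counts $n_i(t)$, which the paper's union over $t$ glosses over.
\end{itemize}

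The step that does not go through as written is the accounting in Step 2. The inequality $n_j(B)\le 1+3\sigma_j^2n_i(B)/\sigma_i^2$ comes from the selection rule at arm $j$'s last pull. It is therefore available only for arms chosen at least once after initialization. An arm never chosen in Phase 2 has $n_j(B)=t_0$, which can be much larger than the right-hand side when $\sigma_j$ is small. Summing correctly gives $B\le Kt_0+3n_i(B)/\lambda_i$, i.e.\ $n_i(B)\ge\lambda_i(B-Kt_0)/3$, not $\lambda_i(B-K)/3$.

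So you need an explicit extra hypothesis such as $B\ge 2Kt_0=32K\log(4KB/\delta)$. Under it you get $n_i(B)\ge\lambda_iB/6$ and the claimed rate with constant $\sqrt{12}$. The stated condition $B>16K\log(4/\delta)$ does not provide this. It does not even ensure $B>Kt_0$, i.e.\ that Phase 1 finishes, so your pigeonhole claim that it excludes the bad case is false.

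No cleverer argument avoids the extra hypothesis. Take one arm of variance $1$, the other $K-1$ of negligible variance, and $Kt_0<B\le(K+1)t_0$. Then arm $1$ receives $B-(K-1)t_0\le 2t_0<2B/K$ pulls. Its error is therefore of order $\sqrt{K/B}$, a factor of order $\sqrt{K/\log(4KB/\delta)}$ above the claimed rate.

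The paper's proof of \cref{lem:opt_alloc_unknown} has the same hole. The over-pulled arm it invokes may be over-pulled purely by forced exploration, where the selection rule says nothing. The remedy is to state $B\ge cKt_0$ for some constant $c>1$, not to change your method.
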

\vspace{-4pt}
\begin{proof}[Proof Sketch of \cref{thm:unknown}]
As we emphasized throughout this section, the primary difference between \algu\ and \algk\ lies in the usage of UCB variances $\ucb{i}{t}$ in place of true variance $\sigma_i^2$ in the choice of $i_t$. Given the rest of \algu\ (\cref{alg:algu}) almost follows the same strategy as that of \algk\ (\cref{alg:algk}), and \algk\ already gives $\tO\left(\sqrt{\frac{\sum_{i=1}^K \sigma_i^2}{B}}\right)$ WEC guarantee with high probability (as derived in \cref{thm:known}), the only missing piece of the puzzle of \cref{thm:unknown} lies in showing with high probability for all time steps $t \in [B]$ and every Arm-$i \in [K]$, $\ucb{i}{t}$ sharply approaches \emph{`close to'} $\sigma_i^2$, roughly at the rate of $\tO\biggn{\sigma_i^2\sqrt{\frac{1}{n_i(t)}}}$. 

More formally, the above claim follows owing to the specific choice of $\ucb{i}{t}$ (see \cref{eq:ucb_var}) and the standard concentration rates of estimated Gaussian variances: 

\begin{restatable}[Estimated Variance Concentration]{lem}{varconc}
\label{lem:var_conc} Consider \algu\ is run for budget $B$, and $t_0 = \tau = 4 \log \frac{4KB}{\delta}$. Then for any failure probability $\delta \in (0,1]$:
\vspace{-5pt}
\begin{align*}
  \mathbb P \Bigg (\exists i \in [K], \text{ and } & t \in \big(\tau K, B\big ] \text{ s.t. }
  \\
  &   \abs{\sigma_i^2 - \hat \sigma_i(t)^2} \geq 2\sigma_i^2\sqrt{\frac{\log(\frac{4KB}{\delta})}{n_i(t)}} \Bigg )
  \leq \frac{\delta}{2}.
\end{align*}
\vspace{-15pt}
\end{restatable}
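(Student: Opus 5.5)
The plan is to reduce the adaptive sampling of \algu\ to a fixed-sample statement for each arm and each possible sample size, and then take a union bound. I use the standard ``stack of rewards'' view. For each Arm-$i$, fix in advance an i.i.d.\ sequence $Y_{i,1},Y_{i,2},\ldots \sim \cN(s_i,\sigma_i^2)$. The $\ell$-th pull of Arm-$i$ reveals $Y_{i,\ell}$. With this coupling, $\hat\sigma_i(t)^2$ equals the empirical (biased, divide-by-$m$) variance $S_{i,m}^2 := \frac1m\sum_{\ell=1}^m (Y_{i,\ell}-\bar Y_{i,m})^2$ evaluated at $m=n_i(t)$, no matter how the pulls were chosen. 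For every $t\in(\tau K,B]$ we have $n_i(t)\ge t_0=\tau$, because of the initial exploration. So it suffices to bound, for each $i\in[K]$ and each deterministic $m\in[\tau,B]$, the probability that $|S_{i,m}^2-\sigma_i^2|\ge 2\sigma_i^2\sqrt{L/m}$, where $L:=\log\frac{4KB}{\delta}$. I will give each upper-tail and each lower-tail event probability at most $e^{-L}=\frac{\delta}{4KB}$. There are at most $2KB$ such events, so the union bound gives $\delta/2$, as the lemma requires.

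For a fixed $m$, Cochran's theorem gives $mS_{i,m}^2/\sigma_i^2 \sim \chi^2_{k}$ with $k=m-1$. I would apply the Laurent--Massart inequalities with $x=L$:
\[
\mathbb P\bign{\chi^2_k-k\ge 2\sqrt{kL}+2L}\le e^{-L},\qquad \mathbb P\bign{k-\chi^2_k\ge 2\sqrt{kL}}\le e^{-L}.
\]
Dividing by $m$, with probability $1-2e^{-L}$ we get
\[
-\tfrac{1}{m}-\tfrac{2\sqrt{kL}}{m}\;\le\; \tfrac{S_{i,m}^2}{\sigma_i^2}-1 \;\le\; \tfrac{2\sqrt{kL}+2L-1}{m}.
\]
The term $2\sqrt{kL}/m\le 2\sqrt{L/m}$ is already the target rate. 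Two pieces remain to be absorbed: the bias terms $\pm 1/m$ (coming from $\mathbb E S^2=\frac{m-1}{m}\sigma^2$), and the sub-exponential term $2L/m$. The only place the choice $\tau=4L$ enters is that it guarantees $L/m\le 1/4$, i.e.\ $\sqrt{L/m}\le 1/2$. On the lower side I would use the slack $2\sqrt{mL}-2\sqrt{(m-1)L}\ge \sqrt{L/m}$ to dominate the $1/m$ bias term, which works since $L\ge 1/m$. On the upper side the $-1$ helps slightly.

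The main obstacle, which I expect to be genuinely delicate, is the upper tail at the stated constant $2$. Laurent--Massart only yields $2\sqrt{kL}+2L\le 3\sqrt{mL}$ when $m\ge 4L$, which gives constant $3$, not $2$. My first attempt would be to replace it by the exact Chernoff bound $\mathbb P(\chi^2_k\ge k(1+\varepsilon))\le \exp\bign{-\tfrac k2(\varepsilon-\log(1+\varepsilon))}$ with $\varepsilon\approx 2\sqrt{L/m}$. I would then check whether $\tfrac k2(\varepsilon-\log(1+\varepsilon))\ge L$ on the range $m\ge 4L$. A quick check near $m=4L$ suggests it fails there. If so, I would fall back to the smallest repair that keeps the lemma's form. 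One option is to run the union bound only over the $m$ where the inequality holds, i.e.\ $m\ge cL$ for a larger absolute $c$ (in line with the paper's actual choice $t_0=16L$ in \cref{thm:unknown}). The other is to accept a constant $3$ in the deviation, with the rest of the argument unchanged.
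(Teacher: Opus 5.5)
Your skeleton matches the paper's proof. The paper simply applies a Laurent--Massart-type inequality (its Theorem~\ref{thm:gauss_var_CI}, asserting $\mathbb P(|\hat\sigma(n)^2-\sigma^2|\ge 2\sigma^2\sqrt{\log(2/\delta)/n})\le\delta$) at level $\delta/(2KB)$ and union-bounds over $i\in[K]$, $t\in[B]$. Your stack-of-rewards coupling, the $\chi^2_{m-1}$ law of the biased estimator, and the union bound over sample sizes rather than rounds are the careful version of that argument. The gap is the one you flagged, but it is not merely delicate: the constant-$2$ upper tail is false. The paper's cited inequality misquotes Laurent--Massart by dropping the $+2x$ term. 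Since $\varepsilon-\log(1+\varepsilon)<\varepsilon^2/2$ for $\varepsilon>0$, the exact Cram\'er rate $\frac m2(\varepsilon-\log(1+\varepsilon))$ of the event $\{S_{i,m}^2\ge\sigma_i^2(1+\varepsilon)\}$ with $\varepsilon=2\sqrt{L/m}$ is strictly below $L$ for every $m$; at $m\approx4L$ it is $2(1-\log2)L\approx0.61L$. Now take $K=1$ (so $n_1(t)=t$), $L$ large, $B=\lceil4L\rceil+2$ and $\delta=4KBe^{-L}$. The single event at the first admissible $t$ then has probability $e^{-(0.61+o(1))L}$, far above $\delta/2=2KBe^{-L}$. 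So the lemma as stated cannot be proved by any argument. This also rules out your repair (a). At $m=16L$ the rate is $8(\tfrac12-\log\tfrac32)L\approx0.76L$, and for any fixed $c$ the deficit at $m=cL$ is a constant fraction of $L$, which the $\Theta(1/\sqrt L)$ prefactor cannot pay for. Only (b) is sound: for $m\ge4L$ one has $2\sqrt{(m-1)L}+2L-1<3\sqrt{mL}$, so the upper tail holds with constant $3$.

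Your lower-tail step also has a scaling slip. Before dividing by $m$ you need $2\sqrt{mL}-2\sqrt{(m-1)L}\ge1$. The left side equals $2\sqrt L/(\sqrt m+\sqrt{m-1})<1$ when $m\ge4L$. Comparing $\sqrt{L/m}$ with $1/m$ mixes the unnormalized slack with the normalized bias, so Laurent--Massart's lower bound alone does not absorb the bias. The constant-$2$ lower tail is nevertheless true. Apply the exact bound $\mathbb P(\chi^2_k\le k(1-u))\le\exp(-\tfrac k2(-u-\log(1-u)))$ with $ku=2\sqrt{mL}-1$ and keep the cubic term. Then $\tfrac{ku^2}{4}\ge L-\sqrt{L/m}$ and $\tfrac{ku^3}{6}\ge(2\sqrt{mL}-1)^3/(6m^2)\ge\sqrt{L/m}$ whenever $m\ge4L$ and $L\ge\log4$, which always holds here. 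What you can prove is therefore $-2\sigma_i^2\sqrt{L/n_i(t)}<\hat\sigma_i(t)^2-\sigma_i^2<3\sigma_i^2\sqrt{L/n_i(t)}$ for all $i,t$, with probability at least $1-\delta/2$. This is enough downstream. The bound $\ucb{i}{t}\ge\sigma_i^2$ uses only the lower tail. With $t_0=16L$, the sandwich of Corollary~\ref{cor:var_conc} becomes $\ucb{i}{t}\le\frac{1+3/4}{1-1/2}\,\sigma_i^2=3.5\,\sigma_i^2$, which only changes the constant $3$ in the allocation lemma.
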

\vspace{-10pt}
The proof of \cref{lem:var_conc} primarily follows from the sharp concentration rate of the Gaussian variance estimates, as analyzed in \cite[4.4]{laurent00adaptive}. The details proof is moved to \cref{app:unknown} in the Appendix.

\cref{cor:var_conc} further shows that with probability at least $(1-\delta/2)$, for all $t \in (Kt_0, B]$ and $i \in [K]$, \cref{lem:var_conc} further implies
$
\sigma_i^2 \leq \ucb{i}{t} \leq 3 \sigma_i^2.
$

Further, thanks to the almost-identical arm-selection rule of \algk\ and \algu\ (modulo, $\sigma_i$ replaced by $\ucb{i}{t}$ in the greedy selection of $i_t$), following a similar line of argument as that of \cref{lem:opt_alloc}, it can be shown that allocation profile $\cR_\cH$ of \algu\ pulls Arm-$i$ for at least $\myfloor{\frac{\lambda_{i}B}{3}}$ many times, for all $i \in [K]$; i.e. $n_i(\cR_\cH) \geq \myfloor{\frac{\lambda_{i}B}{3}}$, with probability at least $1-\delta/2$. More precisely, using \cref{lem:var_conc} and the arm-selection rule of \algu\, we prove that:

\begin{restatable}[Allocation Profile of \algu ~(\cref{alg:algu})]{lem}{allocunkwn}
\label{lem:opt_alloc_unknown} Let 
$
  \lambda_{i}
  = \dfrac{\sigma_i^2}{\sum_{j \in [K]} \sigma_j^2}.
$ Then \algu\ pulls Arm-$i$ for at least $\myfloor{\frac{\lambda_{i}B}{3}}$ many times, for all $i \in [K]$, i.e. $n_i(\cR_\cH) \geq \myfloor{\frac{\lambda_{i}B}{3}}$.
\end{restatable}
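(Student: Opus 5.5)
We work on the good event $\mathcal{E}$ of \cref{lem:var_conc}, which holds with probability at least $1-\delta/2$. On $\mathcal{E}$, the consequence stated after \cref{lem:var_conc} gives
\[
\sigma_i^2 \le \ucb{i}{t} \le 3\sigma_i^2 \quad \text{for all } i \in [K],\ t \in (Kt_0, B].
\]
With $t_0 = 16\log\frac{4KB}{\delta}$, the denominator of \cref{eq:ucb_var} is at least $1/2$. Together with the two-sided bound of \cref{lem:var_conc}, this gives the constant $3$. The rest of the argument is deterministic on $\mathcal{E}$. It is a perturbed version of the counting argument behind \cref{lem:opt_alloc}: the selection index $\ucb{i}{t-1}/n_i(t-1)$ is sandwiched between $\sigma_i^2/n_i(t-1)$ and $3\sigma_i^2/n_i(t-1)$.

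We argue by contradiction. Suppose some arm $i$ ends with $n_i(B) < \myfloor{\lambda_i B/3}$, and write $m := n_i(B)$. Then at every round $t > Kt_0$ the index of arm $i$ satisfies
\[
\frac{\ucb{i}{t-1}}{n_i(t-1)} \ge \frac{\sigma_i^2}{m},
\]
since $n_i(t-1) \le m$ and $\ucb{i}{t-1} \ge \sigma_i^2$. Now fix any other arm $j$ and let $t$ be the last round at which $j$ was pulled in Phase 2. At that round $j$ beat $i$, so
\[
\frac{3\sigma_j^2}{n_j(t-1)} \ge \frac{\ucb{j}{t-1}}{n_j(t-1)} \ge \frac{\sigma_i^2}{m}.
\]
Hence $n_j(B) = n_j(t-1)+1 \le 3m\sigma_j^2/\sigma_i^2 + 1$, i.e.\ $n_j(B) \le 3m\lambda_j/\lambda_i + 1$. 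If $j$ was never pulled in Phase 2, then $n_j(B) = t_0$, and we handle this case separately using the assumption $B > 16K\log\frac{4}{\delta}$ (morally, $t_0 K$ is a small fraction of $B$). Summing over $j$ and using $\sum_j n_j(B) = B$ gives
\[
B \le \frac{3m}{\lambda_i} + K + Kt_0 .
\]
With $m < \lambda_i B/3$, this would contradict the budget identity.

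The main obstacle is precisely this additive slack of $+1$ per arm and the exploration term $Kt_0$. A naive sum gives only $m \ge \lambda_i(B - K - Kt_0)/3$, which is slightly weaker than $\myfloor{\lambda_i B/3}$. To close the gap, I would use the asymmetry of the sandwich: arm $i$'s index is lower-bounded via $\ucb{i}{} \ge \sigma_i^2$, while the competitors are upper-bounded by $3\sigma_j^2$. This leaves a factor-3 margin: the other arms can collectively absorb at most about $B/3 \cdot (1-\lambda_i)$ pulls ``attributable'' to beating $i$, which is far below $B - m$. Concretely, I would first show $\sum_{j\ne i} n_j(B) \le 3m(1-\lambda_i)/\lambda_i + K(t_0+1)$. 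I would then observe that $B - m$ exceeds this bound whenever $m < \lambda_i B/3$ and $B \gtrsim K t_0$, absorbing the lower-order terms into the floor. If that margin does not suffice for tiny $\lambda_i$, the fallback is to note that $n_i(B) \ge t_0$ always, which already exceeds $\myfloor{\lambda_i B/3}$ when $\lambda_i B/3 < t_0$. This covers the small-$\lambda_i$ case, and the margin argument covers the rest.
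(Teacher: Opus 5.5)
Your last-pull summation is sound, and it is a cleaner route than the paper's. The paper instead runs the minimal-time exchange argument of \cref{lem:opt_alloc}: it pairs an under-pulled arm $i$ with an over-pulled arm $j$ and compares their indices via the same sandwich $\sigma_i^2\le\ucb{i}{t}\le3\sigma_i^2$. But your summation, using $n_j(B)\le\max\{t_0,\,3m\lambda_j/\lambda_i+1\}\le t_0+3m\lambda_j/\lambda_i$, actually proves
\[
n_i(B)\;\ge\;\frac{\lambda_i\bigl(B-(K-1)t_0\bigr)}{3-2\lambda_i},
\]
and the step meant to upgrade this to $\myfloor{\lambda_iB/3}$ fails. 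The margin is miscounted. If $m<\lambda_iB/3$, the competitors' allowance $3m(1-\lambda_i)/\lambda_i$ can be nearly $B(1-\lambda_i)$, not $B(1-\lambda_i)/3$, so $B-m$ exceeds it by only about $2\lambda_iB/3$. That beats the exploration overhead only when $\lambda_iB/3\ge(K-1)t_0/2$, which is a condition on $\lambda_iB$, not on $B$. Your fallback covers only $\lambda_iB/3<t_0+1$. For $K\ge4$ the window in between is covered by neither, and no floor can absorb an additive term of order $Kt_0$. The hypothesis $B>16K\log\frac4\delta$ does not help either: it is weaker than $B\ge Kt_0=16K\log\frac{4KB}{\delta}$, so it does not make $Kt_0$ a small fraction of $B$.

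This window cannot be closed by sharper counting, because the statement is false there. Take $K\ge4$, $\sigma_1^2=1$, $\sigma_j^2=\epsilon$ for $j\ge2$, and $B=Kt_0$, which \cref{thm:unknown} admits. Phase~2 is empty, so $n_1(B)=t_0$. Meanwhile $\lambda_1B/3\to Kt_0/3\ge4t_0/3$ as $\epsilon\to0$, so $\myfloor{\lambda_1B/3}>t_0$; this is deterministic, independent of the good event.

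Even for $B\gg Kt_0$, the sandwich alone cannot give the bound in this window. Suppose arm $i$'s index sits at $\sigma_i^2/n_i$, one heavy arm $k$'s at $3\sigma_k^2/n_k$, and all remaining arms have negligible variance, so they stay at $t_0$. Then, up to rounding, $n_i(B)=\max\{t_0,\lambda_i(B-(K-2)t_0)/(3-2\lambda_i)\}$. This is below $\myfloor{\lambda_iB/3}$ whenever $t_0+1\le\lambda_iB/3<(K-2)t_0/2$.

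The paper's proof has the same hole. In the example above, its over-pulled arms $j$ are the low-variance ones, and all their excess pulls occur during forced exploration, where no greedy comparison is available. So the fix is to change the statement, not the counting. Either state the displayed bound, or assume $B\ge2(K-1)t_0$ and conclude $n_i(B)\ge\lambda_iB/6$. Both follow from your summation, and the latter is all \cref{thm:unknown} needs, up to constants.
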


Now given \cref{lem:opt_alloc_unknown} which establishes a lower bound on the minimum allocated budget of Arm-$i \in [K]$, applying the same mean-concentration of \cref{thm:subgauss_ci} over all $i \in [K]$ and taking suitable union bounds, one finally obtains that with probability at least $(1-\delta/2)$,
\vspace{-5pt}
\begin{align}
\label{eq:scr_conc}
    \max_{i \in [K]} |s_i - \hs_i(\cR_\cH)| \leq \sqrt{\frac{6\sum_{i=1}^K \sigma_i^2}{B}\log\frac{4K}{\delta}},
\vspace{-10pt}
\end{align}
which concludes the proof. The detailed analysis of all the results are given in \cref{app:unknown}. 
\end{proof}

\begin{rem}[Relaxation of Noise Stochasticity Assumptions]
\label{rem:noise_unknown}
    Once again, similar to \cref{rem:noise_known}, the execution of \algu\ (\cref{alg:algu}) is \emph{not} tied to the Gaussianity assumption of the score noise and works for arbitrary stochastic noise models. Our theoretical WCE analysis happens to invoke Gaussianity to establish concentration bounds on estimated scores $\hs_i(\cR_\cH)$ (\cref{thm:subgauss_ci}) as well as estimated variances $\hat{\sigma}_i(t)^2$ (\cref{lem:var_conc}) for this case (due to unknown variances $\{\sigma_i^2\}_{i \in [K]}$). But, our analysis extends seamlessly to sub-Gaussian, sub-exponential, heavy-tailed, or other stochastic noise distributions~\cite{boucheron13concentration}, provided one substitutes appropriate concentration inequalities in place of \cref{thm:subgauss_ci} and \cref{lem:var_conc} and adjusts the UCB terms $\ucb{i}{t}$ accordingly. This modularity makes our framework highly practical for real-world language models with diverse noise characteristics.
\end{rem}

%% file: expts_new.tex
\vspace{-10pt}
\section{Experiments}
\label{sec:experiments}
\vspace{-10pt}

\begin{table*}[h]
	\vspace{-3mm}
	\centering
	\scalebox{0.8}{
		\setlength{\tabcolsep}{3pt}
		\renewcommand{\arraystretch}{1.0}
		\begin{tabular}{ll*{6}{c}}
			\toprule
			Model&Attribute & 50k\_Uniform & 50k\_\algu & 50k\_\algk & 100k\_Uniform & 100k\_\algu & 100k\_\algk \\
			\midrule
			llama-3-1-8b & complexity  & 0.385±0.011 & 0.324±0.010 & 0.314±0.007 & 0.266±0.006 & 0.233±0.009 & 0.202±0.003 \\
			gpt-4.1-nano & complexity  & 0.437±0.010 & 0.368±0.011 & 0.381±0.011 & 0.320±0.007 & 0.287±0.013 & 0.251±0.007 \\
			llama-3-1-8b & correctness & 0.420±0.015 & 0.296±0.006 & 0.288±0.006 & 0.310±0.012 & 0.219±0.006 & 0.195±0.004 \\
			gpt-4.1-nano & correctness & 0.451±0.012 & 0.300±0.008 & 0.273±0.006 & 0.321±0.007 & 0.249±0.011 & 0.187±0.003 \\
			llama-3-1-8b & helpfulness & 0.383±0.007 & 0.314±0.006 & 0.321±0.007 & 0.265±0.006 & 0.255±0.009 & 0.203±0.004 \\
			gpt-4.1-nano & helpfulness & 0.413±0.016 & 0.265±0.005 & 0.263±0.005 & 0.291±0.012 & 0.220±0.005 & 0.187±0.004 \\
			llama-3-1-8b & verbosity   & 0.523±0.017 & 0.360±0.010 & 0.343±0.007 & 0.369±0.010 & 0.275±0.010 & 0.243±0.007 \\
			gpt-4.1-nano & verbosity   & 0.455±0.012 & 0.342±0.009 & 0.331±0.008 & 0.322±0.008 & 0.289±0.013 & 0.225±0.005 \\
			\bottomrule
		\end{tabular}
	}
	\vspace{-1mm}
	\caption{\small WCE across different configurations. All values are in mean ± 1 std-deviation errors. Column headings are formatted as ``number of queries \_ algorithm used"}
	\label{table:full_comparison}
	\vspace{-1mm}
\end{table*}

In this section, we empirically verify our algorithm's feasibility from three results. First, the statistics of the dataset, which shows the pattern we exploit. Second, a comparison of, \algu, \algk\;and a baseline of uniform allocation. Finally and a correlation of LLM-judge responses and human-responses to show the value of using LLMs and judges. 
\vspace{-0.1in}
\subsection{Experimental Setup}
\label{sec:exp_setup}
\vspace{-0.1in}
\textbf{Dataset}
We evaluate our algorithm on HelpSteer2 \cite{wang2024helpsteer2}, which is a dataset of 20.3 prompt-response pairs evaluated by humans on certain attributes such as ``helpfulness", ``correctness", ``complexity", and ``verbosity". We chose HelpSteer2 because of it's popularity and the fact that it contains human scores for multiple attributes. LLMs trained on HelpSteer2 reach state-of-the-art performance comparable to much larger datasets. Thus demonstrating the high-quality ratings on HelpSteer2. To thoroughly evaluate our algorithm, we determine how well we can approximate the high bar of human ratings on the HelpSteer2 dataset.

In our experiments, we take 1k prompt-response pairs from HelpSteer2, focusing on four attributes: ``helpfulness", ``correctness", ``complexity", and ``verbosity". We then used the LLMs ``Llama 3.1 8b instruct" and ``GPT-4.1 nano" to create a set of 30 ratings per prompt-response pair evaluating the prompt-response on each of the four attributes individually. This creates a total of 30k prompt-response evaluations per attribute. We evaluate our algorithms on this generated data.

\textbf{Simulation} To simulate our algorithms' performance, we allow the algorithm to direct which prompt-response pair to evaluate at each iteration. When a pair is selected, we randomly select one of the 30 evaluations of the prompt-response pairs and return it's evaluation to simulate a judge-LLM evaluating the prompt-response pair. 

\textbf{Algorithms} We implemented \algu\; as proposed in \cref{sec:alg_unknown_desc}. We also specify the value of $\delta$ for each instance of \algu. We also implemented \algk\; as well as a baseline of uniform allocation of compute.

\textbf{Hyperparameter}   
$t_0$ is defined to be $t_0 = 4\ln(1/\delta)$. A lower $\delta$ makes a tighter bound, elaborated in \cref{thm:unknown}, however, it will require the algorithm to behave as uniform allocation for much longer, and hence not leave much room for improvement. Choosing a high value of $\delta$ is also problematic as the UCB fails more frequently, and our allocation of compute is not similar to the optimal variance allocation of compute.

Hence, choosing an appropriate value of $\delta$ is crucial for the algorithm's success. The $\delta$ we chose for our experiments was tuned and selected because it performed well across all our experiments. We also include an example \cref{fig:gpt_50k_delta07} in which a poorly chosen $\delta$ undermines our results.


\vspace{0mm}
\begin{figure}[h]
	\begin{minipage}{0.48\columnwidth}
		\centering
		\includegraphics[trim=10 10 10 10, clip, width=\textwidth]{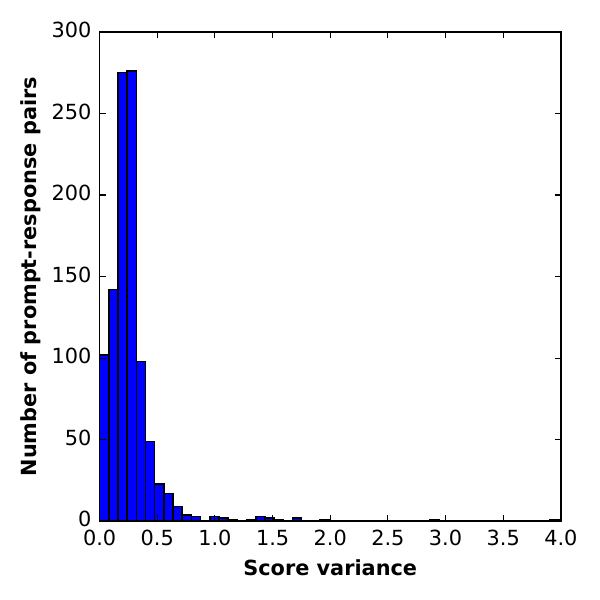}
		\vspace{-6mm}
		\caption{Histogram of score variance of GPT-4.1 nano on helpfulness attribute }
		\label{fig:population_variance}
	\end{minipage}
	\hfill
	\begin{minipage}{0.48\columnwidth}
		\centering
		\includegraphics[trim=10 10 10 10, clip, width=\textwidth]{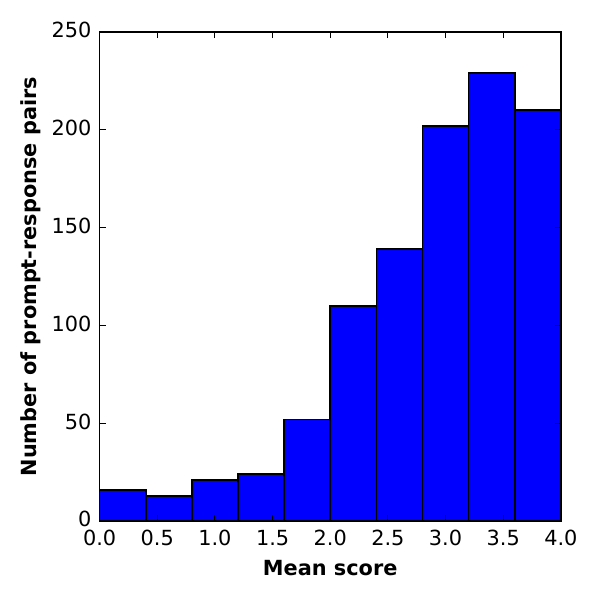}
		\vspace{-6mm}
		\caption{Histogram of mean scores of GPT-4.1 nano on helpfulness attribute }
		\label{fig:mean_scores}
	\end{minipage}
\end{figure}

\textbf{Dataset Statistics}
Here we draw attention to \cref{fig:population_variance} and \cref{fig:mean_scores}. Note that this difference in variation in \cref{fig:population_variance} is exactly what \algk\; exploits and \algu\; approximates to reduce errors faster than uniform allocation. \cref{fig:mean_scores} depicts a histogram of the distribution our algorithms converge to. We choose the same attribute ``helpfulness", evaluated on GPT-4.1 nano, that is used to create Figures \ref{fig:gpt_50k_delta007}, \ref{fig:gpt_100k_delta007}, \ref{fig:gpt_50k_delta07} and \ref{fig:correlation_gpt}.

	
	


\subsection{Inference from the Empirical Results}
\vspace{-0.1in}
This section discusses our individual experiments and what we infer from each experiment. Additional experiments and prompt details are included in \cref{app:expts}.

\begin{figure*}[t!]
	\centering
	\begin{minipage}{0.3\textwidth}
		\centering
		\includegraphics[width=2in]{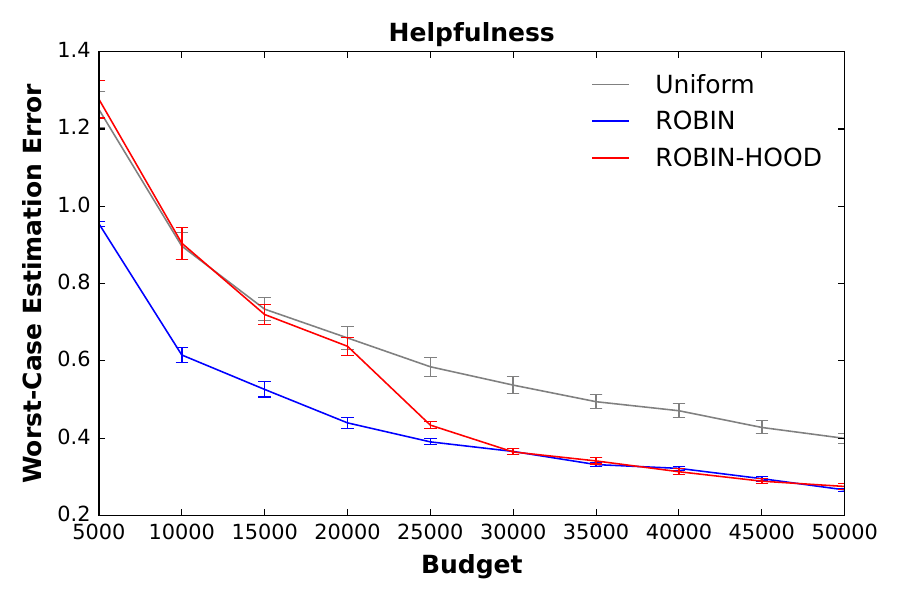}
		\vspace{-0.25in}
		\caption{Maximum error, using GPT-4.1 nano, $\delta$=0.007, warm-up period: 20164 samples}
		\label{fig:gpt_50k_delta007}
	\end{minipage}
	\hfill
	\begin{minipage}{0.3\textwidth}
		\centering
		\includegraphics[width=2in]{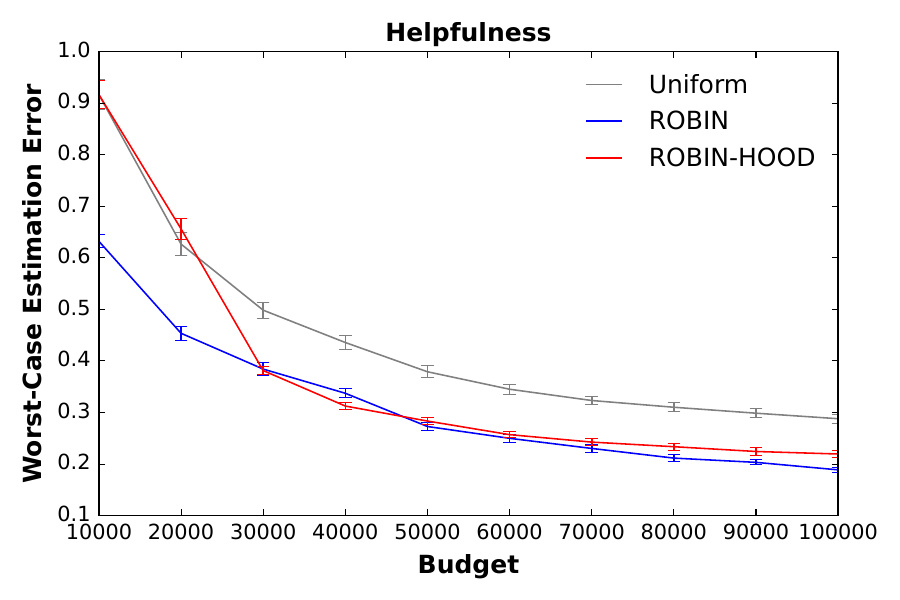}
		\vspace{-0.25in}
		\caption{Maximum error, using GPT-4.1 nano, $\delta$=0.007, warm-up period: 20164 samples}
		\label{fig:gpt_100k_delta007}
	\end{minipage}
	\hfill
	\begin{minipage}{0.3\textwidth}
		\centering
		\includegraphics[width=2in]{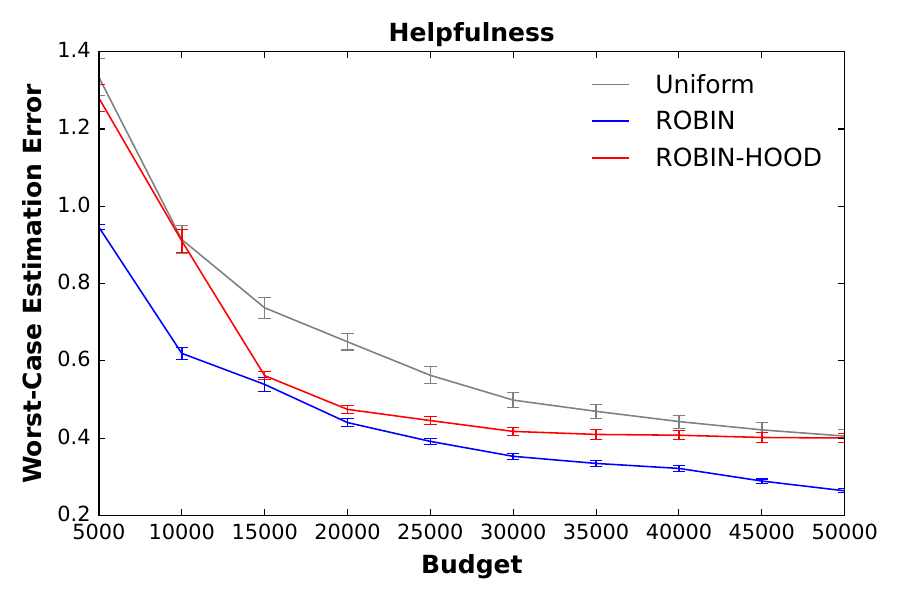}
		\vspace{-0.25in}
		\caption{Maximum error, using GPT-4.1 nano, $\delta$=0.07, warm-up period: 10807 samples}
		\label{fig:gpt_50k_delta07}
	\end{minipage}
	
	\vspace{-5.5mm}
\end{figure*}

\textbf{Figures [\ref{fig:gpt_50k_delta007},\ref{fig:gpt_100k_delta007} and \ref{fig:gpt_50k_delta07}]} we outperform the uniform allocation baseline, on the WCE metric defined in \cref{eq:wce}. Our algorithm is based on an estimate of the variance, so we cannot beat the true variance (which is unknown when querying the LLM-judges). Hence, our algorithm's performance lies between the uniform compute allocation and \algk, which is the true variance-based allocation. Note that as our algorithm allocates compute uniformly until the warm-up period is over, its performance is identical to the Uniform allocation during this period, after which there is a sharp increase in performance. Note that as we choose a higher delta, the performance of \algu\; approaches that of \algk, this is natural as a higher delta implies a tighter upper bound on the variance of each prompt-response pair. 

\textbf{Time Saved} The main motivation of these experiments is to demonstrate how our approach can reduce the time spent evaluating an LLM using a Judge-LLM by approximately half. By comparing \cref{fig:gpt_50k_delta007} and \cref{fig:gpt_100k_delta007} we can see that through uniform allocation of compute we get a maximum error score of about 0.3 in 100k queries, however with \algu, we can get the same error rate in approximately 50k queries. We corroborate this result in \cref{table:full_comparison}.

Also, we include an example of poorly choosing $\delta$ where the WCE plateaus, as observed in \cref{fig:gpt_50k_delta07}. Here at the 50k step mark, we see that the error is virtually identical to the uniform error and will be overtaken by the uniform allocation algorithm. Hence, we see the importance of choosing $\delta$ appropriately.




\textbf{\cref{table:full_comparison}} shows the results of the uniform baseline, \algu, to the baseline and \algk. Here we can see that across all models and attributes in the dataset, there is a statistically significant difference between \algu\;and Uniform compute allocations. We can also see that the \algu's error at the 50k mark is close to the Uniform algorithm's error at the 100k mark across all models and attributes. This further corroborates that we can reduce the time of evaluating an LLM through a Judge-LLM by half. For all experiments, we chose $\delta= 0.007$, resulting in a warm-up period of 20164 queries. All the experiments were averaged across 50 runs and were evaluated on the ``helpfulness" attribute.

\begin{figure}[htbp]
	\begin{minipage}{0.49\columnwidth}
		\centering
		\includegraphics[trim=10 10 10 10, clip, width=\textwidth]{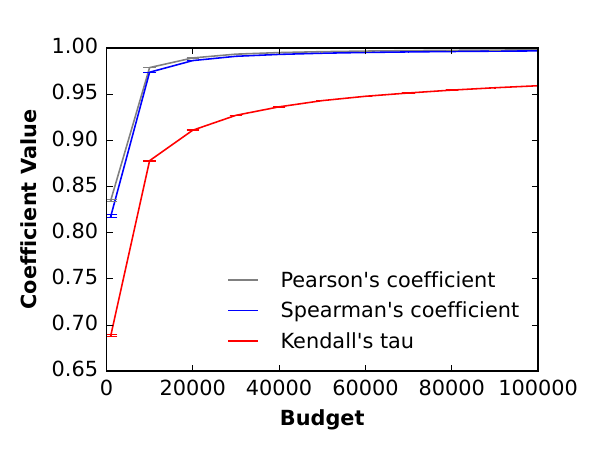}
		\vspace{-6mm}
		\caption{Correlation with human ratings, Llama 3.1 8B Instruct, $\delta$=0.007, warm-up: 20164}
		\label{fig:correlation_llama}
	\end{minipage}
	\hfill
	\begin{minipage}{0.49\columnwidth}
		\centering
		\includegraphics[trim=10 10 10 10, clip, width=\textwidth]{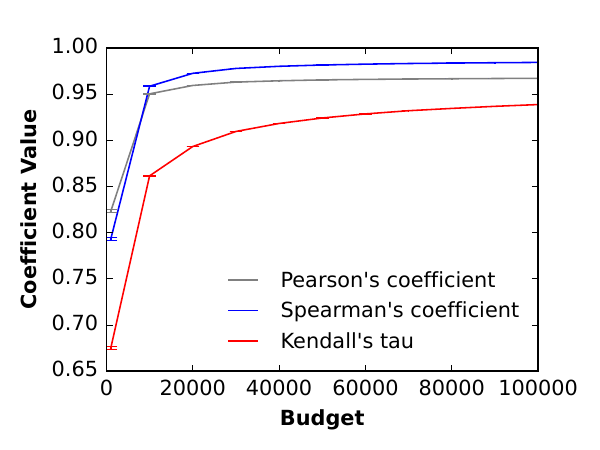}
		\vspace{-6mm}
		\caption{Correlation with human ratings, GPT-4.1 nano, $\delta$=0.007, warm-up: 20164}
		\label{fig:correlation_gpt}
	\end{minipage}
	\vspace{-5mm}
\end{figure}



Our analysis and theoretical results show that we optimize the WCE metric. In the next experiment, we investigate how estimated scores with a lower WCE correlate with human scores better than estimated scores with a higher WCE.

\textbf{Figures \ref{fig:correlation_llama} and \ref{fig:correlation_gpt}} show how the correlation between LLM estimates and human scores rises as the budget increases. This correlation demonstrates that using Judge-LLM provides empirical scores that are highly correlated with human scores, thus validating the approach of using LLMs as judges for other LLMs. We used \algu\; to create empirical ratings that we correlated to human ratings.

%% file: conclusion.tex
\vspace{-7pt}
\section{Conclusions}
\label{sec:concl}
\vspace{-7pt}
We established a principled framework for confidence estimation in LLM-as-judge systems under fixed computational budgets by formalizing the problem as variance-adaptive resource allocation. 
\vspace{-10pt}
\paragraph{Future Directions.} 
Several high-impact extensions merit investigation: \emph{Contextual prediction:} Incorporating features, e.g., query complexity, answer length, semantic embeddings, could enable predictive models that estimate variances with personalization. \emph{Multi-attribute evaluation:} Extending our framework to joint allocation across correlated evaluation dimensions (helpfulness, factuality, coherence) presents opportunities to exploit correlation structure for further efficiency gains. \emph{Active pair selection:} Beyond allocating queries across fixed pairs, strategic selection of \emph{which} pairs to evaluate (e.g., near decision boundaries in model comparison) could also yield order-of-magnitude cost reductions in large-scale scenarios. 

%% file: appendix.tex
\appendix

\onecolumn{

\section*{\centering\Large{Supplementary for \papertitle}}
\vspace*{1cm}

\input{app_known.tex}

\input{app_unknown.tex}


\input{appendix_expts.tex}

\newpage

\input{appendix_prompts.tex}
}

%% file: app_known.tex
\section{Appendix for \cref{sec:alg_known}}
\label{app:known}

\subsection{Proof of \cref{thm:known}}

Further applying the mean-concentration of \cref{thm:subgauss_ci} over all $i \in [K]$ and taking a union bound, one finally obtains that with probability at least $(1-\delta)$,
\begin{align}
\label{eq:scr_conc}
    |s_i - \hs_i(\cR)| \leq \sqrt{\frac{2\sum_{i=1}^K \sigma_i^2}{B}\log\frac{2K}{\delta}}.
\end{align}

This can be shown using

\subgci*

Using \cref{thm:subgauss_ci}, and further noting that for any $i \in [K]$, given $n_i(\cR)$ $=n$ (say), $\hs_i(\cR) \sim \cS_\cG\bign{s_i,\frac{\sigma_i^2}{n}}$ \cite[Lemma 5.4]{lattimore19bandit}, we have
\begin{align}
\label{eq:1}
    \mathbb{P}(\abs{\hs_i(\cR)-s_i} \geq \varepsilon) \leq 2\exp\left(-\frac{n\varepsilon^2}{2\sigma_i^2}\right).
\end{align}

Further setting $2\exp\left(-\frac{n\varepsilon^2}{2\sigma_i^2}\right) \leq \delta/K$, \cref{eq:1} gives:

\begin{align}
    \mathbb{P}\biggn{\abs{\hs_i(\cR)-s_i} \geq \sqrt{\frac{2\sum_{i=1}^K \sigma_i^2}{B}\log\frac{2K}{\delta}}} \leq \delta/K,
\end{align}
as follows noting, 

\begin{align*}
\exp\left(-\frac{n\varepsilon^2}{2\sigma_i^2}\right) \leq \delta/K
\implies
\epsilon^2 \geq \frac{2\sigma_i^2}{n}\ln \frac{K}{\delta};
\end{align*}

which further implies $\epsilon = \sqrt{\frac{2\sum_{i=1}^K \sigma_i^2}{B}\log\frac{2K}{\delta}}$, is a valid choice as $n \geq \myfloor{\frac{\sigma_i^2B}{\sum_{j \in [K]} \sigma_j^2} }$ by \cref{lem:opt_alloc}.

Taking a union bound over all $i \in [K]$, concludes the claim.

\subsection{Proof of \cref{lem:opt_alloc}}

\optalloc*

\begin{proof}


Let for any $x \in \R_+$, let $\rnd(x) \in \N \cup \{0\}$ denotes the closest integer to $x$ such that $\abs{x - \rnd(x)} \leq 0.5$.

Assume the statement is false, and there exists an arm, say Arm-$i$, that was under-pulled, so $n_i(\cR) \leq \rnd(\lambda_i B)-1$. 

But note this immediately implies $\exists j \in [K]$, an over-pulled arm, such that $n_j(\cR) \geq \rnd(\lambda_i B)+1$, as otherwise $\sum_{k \in [K]}n_k(\cR) < B$, which is not possible. 

But for the above situation to occur, assume that $t$ is the minimum time-index such that: 
$n_i(t-1) \leq \rnd(\lambda_i B)-1$ and $n_j(t-1) = \rnd(\lambda_j B)$ and $j$ got pulled again. So at the end of time $t$ it first happens that 
$n_i(t) = n_i(t-1)$ and $n_j(t) = \rnd(\lambda_j B)+1$.

But note at time $t$,
\begin{align*}
    \frac{\sigma_i^2}{n_{i}(t-1)} 
    \geq \frac{\sigma_i^2}{\rnd(\lambda_i B)-1} 
    > \frac{\sigma_i^2}{\lambda_i B}
  = \frac{\sum_{k \in [K]} \sigma_k^2}{B}
  = \frac{\sigma_j^2}{\lambda_j B}
  > \frac{\sigma_j^2}{\rnd(\lambda_j B)+1}
  = \frac{\sigma_j^2}{n_i(t-1)}\,,
\end{align*}
leading to a contradiction that $j$ is pulled at time $t$! This means $j$ cannot be pulled at any such round $t$, or in turn $n_i(\cR) \not\leq \rnd(\lambda_i B)-1$; it has to be that $n_i(\cR) \geq \rnd(\lambda_i B) \leq \myfloor{\lambda_i B}$. 

An exact similar proof by contradiction argument also leads to $n_i(\cR) \leq \rnd(\lambda_i B) \leq \myceil{\lambda_i B}$

This concludes the proof.
\end{proof}

%% file: app_unknown.tex
\section{Appendix for \cref{sec:alg_unknown}}
\label{app:unknown}

\subsection{Proof of Variance Concentration \cref{lem:var_conc} and Implications \cref{cor:var_conc}}

\varconc*

\begin{proof}[Proof of \cref{lem:var_conc}]

We start recalling the sharp concentration bound of the estimated variance of Gaussian random variables from \cite[4.4]{laurent00adaptive}, which is known to guarantee: 

\begin{thm}[Variance Concentration of Gaussian Random Variables \cite{laurent00adaptive}]
\label{thm:gauss_var_CI}
    If $X_1, \ldots X_n$ are $n$ iid draws from $\cN(\mu,\sigma^2)$, and we define $\hat \sigma(n)^2 = \frac{1}{n-1}\sum_{t = 1}^{n}\bign{X_t - \hat\mu(n)}^2$, where $\hat \mu(n): = \sum_{t = 1}^n X_t$, then
\begin{align*}
  \pr{\abs{\sigma_i^2 - {\hat{\sigma}(n)^2}}
  \geq 2 \sigma_i^2\sqrt{\frac{\log(2 / \delta)}{n}}}
  \leq \delta\,.
\end{align*} 
\end{thm}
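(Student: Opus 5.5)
The plan is to reduce the statement to a chi-square tail bound and treat the two tails separately, each with failure probability at most $\delta/2$. Write $x := \log(2/\delta)$, $D := n-1$ and $\varepsilon := 2\sqrt{x/n}$. For i.i.d.\ $\cN(\mu,\sigma^2)$ draws, Cochran's theorem gives $Z := D\hat\sigma(n)^2/\sigma^2 \sim \chi^2_D$. Here I read $\hat\mu(n)$ as the sample mean $\frac1n\sum_t X_t$; the missing $1/n$ in the statement is a typo, and I identify $\sigma_i$ with $\sigma$. The event in the statement is then $\{Z \le D(1-\varepsilon)\}\cup\{Z \ge D(1+\varepsilon)\}$. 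It therefore suffices to show that each of these two events has probability at most $e^{-x}=\delta/2$. For this I will use the exact Chernoff bound for $\chi^2_D$, whose log-MGF is $-\frac D2\log(1-2\lambda)$, rather than the looser packaged form $P(Z-D\ge 2\sqrt{Dx}+2x)\le e^{-x}$ of \cite{laurent00adaptive}.

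\emph{Lower tail.} If $\varepsilon\ge1$, the event $Z\le D(1-\varepsilon)\le 0$ has probability zero, since $Z\ge 0$ almost surely and $\chi^2_D$ has no atom at $0$ for $D\ge1$. If $\varepsilon<1$, optimizing the Chernoff bound gives
\[
P\bign{Z\le D(1-\varepsilon)}\le \exp\Bigl(\tfrac D2\bigl(\varepsilon+\log(1-\varepsilon)\bigr)\Bigr)\le \exp\Bigl(-D\bigl(\tfrac{\varepsilon^2}{4}+\tfrac{\varepsilon^3}{6}\bigr)\Bigr).
\]
Substituting $\varepsilon=2\sqrt{x/n}$ turns the exponent into $(n-1)\bigl(\frac xn+\frac43(\frac xn)^{3/2}\bigr)$. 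This is at least $x$ exactly when $\frac43(n-1)\sqrt{x/n}\ge 1$. That inequality holds once $n\ge3$ for all $x\ge\log 2$. The only remaining cases are $n\le 2$, and there $\varepsilon\ge 1$ whenever $x\ge\log 2$, so they fall under the trivial case. The cubic term is exactly what compensates for using $n$ rather than $n-1$ in the radius.

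\emph{Upper tail; this is the main obstacle.} Chernoff gives
\[
P\bign{Z\ge D(1+\varepsilon)}\le\exp\Bigl(-\tfrac D2\bigl(\varepsilon-\log(1+\varepsilon)\bigr)\Bigr).
\]
Using $\varepsilon-\log(1+\varepsilon)\ge \frac{\varepsilon^2}{2}-\frac{\varepsilon^3}{3}$ (or the sharper $\ge \frac{\varepsilon^2}{2(1+\varepsilon/3)}$), the exponent is at least $x$ when $x/n$ is bounded by a small absolute constant. The $n$ versus $n-1$ loss is absorbed by slack in the constant only after a careful comparison, which I would carry out using the Bernstein form of the bound. However, for fixed $n$ and $x\to\infty$ the exponent grows only like $\frac D2\varepsilon\asymp\sqrt{nx}$, not like $x$. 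For instance, with $n=2$ one has $P(\chi^2_1\ge 1+2\sqrt{x/2})\approx e^{-\sqrt{x/2}}\gg e^{-x}$. So the upper tail, exactly as worded with constant $2$ and no lower bound on $n$, cannot be proved in full generality.

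What I would actually establish is the following. The lower tail holds as stated for all $n$. The upper tail holds in the regime $n\ge c\,x$, which includes $n\ge 4\log(2/\delta)$; this is precisely the regime in which \cref{lem:var_conc} invokes the theorem, since there $n_i(t)\ge\tau=4\log\frac{4KB}{\delta}$. Outside that regime, the correct form of the upper deviation carries the additional $2\sigma^2 x/(n-1)$ term of \cite{laurent00adaptive}. Combining the two tails by a union bound gives total failure probability at most $\delta$.
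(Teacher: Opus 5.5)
You are right that the statement cannot be proved as written. The paper gives no argument beyond citing \cite{laurent00adaptive}, and the inequality there has upper-tail radius $2\sqrt{x/(n-1)}+2x/(n-1)$, with $x=\log(2/\delta)$, rather than $2\sqrt{x/n}$. Your $n=2$ counterexample is genuine: already at $\delta=0.05$ one gets $\mathbb{P}(\chi^2_1\ge 1+\sqrt{2x})\approx 0.054>\delta$. Your lower-tail argument is also correct, including using the cubic term to pay for $n$ versus $n-1$.

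The gap is your salvage of the upper tail, which fails on two counts.

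\emph{Chernoff cannot deliver it for any $n$.} Since $\frac{d}{d\varepsilon}\bigl(\tfrac{\varepsilon^2}{2}-\varepsilon+\log(1+\varepsilon)\bigr)=\tfrac{\varepsilon^2}{1+\varepsilon}>0$, we have $\varepsilon-\log(1+\varepsilon)<\varepsilon^2/2$ for every $\varepsilon>0$. Your exponent is therefore strictly below $\frac{n-1}{4}\varepsilon^2=\frac{n-1}{n}\,x<x$. There is no slack in the constant $2$ to absorb the $n$ versus $n-1$ loss: it is exactly the Gaussian scale of $\chi^2_D/D$, whose variance is $2/D$. No Bernstein-type comparison can rescue it.

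\emph{The restricted claim is itself false.} By Cram\'er the Chernoff exponent is asymptotically exact. For $n\approx c\,x$, the deviation $\varepsilon\approx 2/\sqrt{c}$ is a fixed constant. The upper tail is then $e^{-\rho x}$ up to polynomial factors, with $\rho=2(\varepsilon-\log(1+\varepsilon))/\varepsilon^2<1$. This exceeds $\delta$ once $\delta$ is small enough, whatever $c$ is. Concretely, take $n=40$ and $x=10$, i.e.\ $n=4\log(2/\delta)$ with $\delta=2e^{-10}\approx 9.1\times 10^{-5}$. Then $\varepsilon=1$ and $\mathbb{P}(\chi^2_{39}\ge 78)\approx 2\times 10^{-4}>\delta$. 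So the theorem fails in exactly the regime $n\ge 4\log(2/\delta)$ that \cref{lem:var_conc} relies on. It fails likewise for the later choice $t_0=16\log(\cdot)$ once $\delta$ is small.

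The repair is to change the radius, not to restrict $n$. For example, if $n-1\ge 4x$ then $2x/(n-1)\le\sqrt{x/(n-1)}$, and Laurent--Massart gives
$\mathbb{P}\bigl(|\hat\sigma(n)^2-\sigma^2|\ge 3\sigma^2\sqrt{x/(n-1)}\bigr)\le\delta$.
The constant in \cref{eq:ucb_var} would have to be adjusted to match.

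A minor point: your parenthetical bound $\varepsilon-\log(1+\varepsilon)\ge\frac{\varepsilon^2}{2(1+\varepsilon/3)}$ is false already at third order in $\varepsilon$. The valid version has $1+2\varepsilon/3$ in the denominator.
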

    
The proof of \cref{lem:var_conc} now follows by directly applying \cref{thm:gauss_var_CI} on $\hat \sigma_i(t)$ for with failure probability $\frac{\delta}{2BK}$, and further taking a union bound over all $t \in [B]$ and $i \in [K]$.
\end{proof}

\begin{cor}[Variance Sandwithcing]
\label{cor:var_conc}
 With probability at least $(1-\delta/2)$, for all $t \in (Kt_0, B]$ and $i \in [K]$:
$
\sigma_i^2 \leq \ucb{i}{t} \leq 3 \sigma_i^2.
$
\end{cor}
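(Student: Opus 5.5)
On the event $\mathcal{E}$ of \cref{lem:var_conc}, which has probability at least $1-\delta/2$, I would show that both inequalities of \cref{cor:var_conc} hold deterministically. Write $L := \log(4KB/\delta)$ and $\eta_i(t) := \sqrt{L/n_i(t)}$. On $\mathcal{E}$, for every $i\in[K]$ and every $t\in(Kt_0,B]$,
\[
\sigma_i^2\bigl(1-2\eta_i(t)\bigr) \le \hat\sigma_i(t)^2 \le \sigma_i^2\bigl(1+2\eta_i(t)\bigr).
\]
By \cref{eq:ucb_var}, $\ucb{i}{t} = \hat\sigma_i(t)^2/(1-2\eta_i(t))$, because $\sqrt{4L/n_i(t)} = 2\eta_i(t)$. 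After the initial exploration, $n_i(t)\ge t_0 = 16L$, so $\eta_i(t)\le 1/4$. In particular the denominator $1-2\eta_i(t)\ge 1/2$ is positive, and the UCB is well defined.

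\textbf{Lower bound.} Dividing the left inequality by $1-2\eta_i(t)>0$ gives $\ucb{i}{t}\ge\sigma_i^2$ directly.

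\textbf{Upper bound.} Dividing the right inequality by the same quantity gives
\[
\ucb{i}{t}\le \sigma_i^2\,\frac{1+2\eta}{1-2\eta}.
\]
The map $\eta\mapsto(1+2\eta)/(1-2\eta)$ is increasing on $[0,1/2)$, so for $\eta\le 1/4$ its value is at most $(3/2)/(1/2)=3$. Hence $\ucb{i}{t}\le 3\sigma_i^2$.

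Since both bounds hold for all $i$ and all $t\in(Kt_0,B]$ simultaneously on $\mathcal{E}$, the corollary follows with probability at least $1-\delta/2$.

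The main obstacle is the mismatch in how $t_0$ is specified. \cref{lem:var_conc} is stated with $\tau = 4L$, whereas \cref{thm:unknown} uses $t_0 = 16L$. The sandwich needs $\eta\le 1/4$, which is exactly $n_i(t)\ge 16L$. With only $n_i(t) \ge 4L$ we would get $\eta\le 1/2$, the denominator could vanish, and the factor $3$ would fail.

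So I would take $t_0=16L$, as in \cref{thm:unknown}. This is harmless for invoking \cref{lem:var_conc}. Its proof is a union bound of \cref{thm:gauss_var_CI} over all $(i,t)$, and that argument is unchanged on the smaller time range $(16LK, B]\subseteq(4LK,B]$, since $n_i(t)$ is only larger there.

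A minor point is that \cref{eq:est_var} normalizes by $n_i(t)$ rather than $n_i(t)-1$. I would either absorb the factor $(n-1)/n$ into the constants or assume the lemma is stated for the estimator actually used. I will not dwell on this.
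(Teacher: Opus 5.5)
Your argument is correct and is essentially the paper's proof: on the event of \cref{lem:var_conc} you write $\sigma_i^2(1-2\eta)\le\hat\sigma_i(t)^2\le\sigma_i^2(1+2\eta)$, divide by $1-2\eta>0$ to get the lower bound, and use $n_i(t)\ge t_0=16\log(4KB/\delta)$ to get $(1+2\eta)/(1-2\eta)\le 3$ for the upper bound. The paper's proof also takes $t_0=16\log(4KB/\delta)$ here without comment, even though the lemma is stated with $\tau=4\log(4KB/\delta)$, so your remark on that mismatch is apt.
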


\begin{proof}
To prove the first part, note \cref{lem:var_conc} immediately implies that with probability at least $(1-\delta/2)$, for all $t \in (Kt_0, B]$ and $i \in [K]$,
\[
\frac{\hat{\sigma}_{i}(t)^2}{1 + \sqrt{\frac{4\log(4KB/ \delta)}{n_i(t)}}} \leq \sigma_i^2 \leq \frac{\hat{\sigma}_{i}(t)^2}{1 - \sqrt{\frac{4\log(4KB/ \delta)}{n_i(t)}}} ~= \ucb{i}{t},
\]
justifying the indeed $\ucb{i}{t}$ is a high-confidence upper bound of $\sigma_i^2$ (recall \cref{eq:ucb_var}).

 To prove the second part, note that owing to our initial exploration, since $n_i(t) \geq t_0 = 16 \log \frac{4KB}{\delta}$, we can conclude: 

\begin{align*}
    & \frac{\hat{\sigma}_{i}(t)^2}{1 - \sqrt{\frac{4\log(4KB/ \delta)}{n_i(t)}}} \frac{1 - \sqrt{\frac{4\log(4KB/ \delta)}{n_i(t)}}}{1 + \sqrt{\frac{4\log(4KB/ \delta)}{n_i(t)}}} 
    \leq 
    \sigma_i^2  \frac{1 - \sqrt{\frac{4\log(4KB/ \delta)}{n_i(t)}}}{1 - \sqrt{\frac{4\log(4KB/ \delta)}{n_i(t)}}}
    \\
    \implies & \frac{\hat{\sigma}_{i}(t)^2}{1 - \sqrt{\frac{4\log(4KB/ \delta)}{n_i(t)}}}
    \leq \sigma_i^2  \frac{1 + \sqrt{\frac{4\log(4KB/ \delta)}{n_i(t)}}}{1 - \sqrt{\frac{4\log(4KB/ \delta)}{n_i(t)}}}.
\end{align*}

But since $n_i(t) \geq t_0$, by initial exploration, we further get:
\begin{align*}
    & \ucb{i}{t} = \frac{\hat{\sigma}_{i}(t)^2}{1 - \sqrt{\frac{4\log(4KB/ \delta)}{n_i(t)}}}
    \leq \sigma_i^2  \frac{1 + \sqrt{\frac{4\log(4KB/ \delta)}{n_i(t)}}}{1 - \sqrt{\frac{4\log(4KB/ \delta)}{n_i(t)}}}
    \leq 3 \sigma_i^2,
\end{align*}
concluding the result.
\end{proof}

\subsection{Proof of Allocation Profile \cref{lem:opt_alloc_unknown}}

\allocunkwn*

\begin{proof}
Recall the notation $\rnd(x) \in \N \cup \{0\}$ from the proof of \cref{lem:opt_alloc}.

Let's assume the statement of \cref{lem:opt_alloc_unknown} is false, and there exists an arm, say Arm-$i$, that was under-pulled, so $n_i(\cR) \leq \rnd(\frac{\lambda_i B}{3})-1$. 

But note this immediately implies $\exists j \in [K]$, an over-pulled arm, such that $n_j(\cR) \geq \rnd({\lambda_i B})+1$, as otherwise $\sum_{k \in [K]}n_k(\cR) < B$, which is not possible. 

But for the above situation to occur, assume that $t$ is the minimum time-index such that: 
$n_i(t-1) \leq \rnd(\frac{\lambda_i B}{3})-1$ and $n_j(t-1) = \rnd(\lambda_j B)$ and $j$ got pulled again. So at the end of time $t$ it first happens that 
$n_i(t) = n_i(t-1)$ and $n_j(t) = \rnd(\lambda_j B)+1$.

But note at time $t$,
\begin{align*}
    \frac{\ucb{i}{t-1}}{n_{i}(t-1)} \overset{(a)}{\geq} \frac{\sigma_i^2}{n_{i}(t-1)} 
    > \frac{\sigma_i^2}{\rnd(\frac{\lambda_i B}{3})-1} 
    \geq \frac{3\sigma_i^2}{\lambda_i B}
  = \frac{3\sum_{k \in [K]} \sigma_k^2}{B}
  = \frac{3\sigma_j^2}{\lambda_j B}
  \overset{(b)}{\geq} \frac{\ucb{j}{t-1}}{\lambda_j B}
  > \frac{\ucb{j}{t-1}}{\rnd(\lambda_j B)+1}  
  = \frac{\ucb{j}{t-1}}{n_j(t-1)},
\end{align*}
where $(a)$ and $(b)$ follows from \cref{cor:var_conc}.
But that leads to a contradiction that $j$ is pulled at time $t$, since it turns out that at time $t$ 
\[
\frac{\ucb{j}{t-1}}{n_{j}(t-1)} < \frac{\ucb{i}{t-1}}{n_{i}(t-1)}.
\]
This means $j$ cannot be pulled at any such round $t$, or in turn $n_i(\cR) \not\leq \rnd(\frac{\lambda_i B}{3})-1$; it has to be that $n_i(\cR) \geq \rnd(\frac{\lambda_i B}{3}) \leq \myfloor{\frac{\lambda_i B}{3}}$. 
\end{proof}

\subsection{Proof of the Main Theorem \cref{thm:unknown}}

\unknown*

\begin{proof}[Proof of \cref{thm:unknown}]
The key lies in the fact that our allocation strategy of \algu\ (\cref{alg:algu}) ensures each arm gets at least a `fair-share' of the total budget, based on its underlying variance! More formally, for all (prompt-response) pair $i \in [K]$, $n_i(\cR_\cH) \geq \myfloor{\lambda_{i}B/3}$, with probability at least $1-\delta/2$, as proved in \cref{lem:opt_alloc_unknown}.

Further applying the mean-concentration of \cref{thm:subgauss_ci} over all $i \in [K]$, following the same line of concentration argument as given in the proof of \cref{thm:known}, and taking a union bound over all $i \in [K]$ and $t \in [B]$, one finally obtains that with probability at least $(1-\delta/2)$,
\vspace{-5pt}
\begin{align}
\label{eq:scr_conc}
    \max_{i \in [K]} |s_i - \hs_i(\cR_\cH)| \leq 2\sqrt{\frac{\sum_{i=1}^K \sigma_i^2}{B}\log\frac{4K}{\delta}},
\vspace{-10pt}
\end{align}
which concludes the proof. 
\end{proof}

%% file: appendix_expts.tex
\section{Appendix for \cref{sec:experiments}: Additional Experiments}
\label{app:expts}

This section expands upon the evidence in \cref{sec:experiments}. We detail the exact process used to generate the dataset that we used, and demonstrate our algorithms across multiple datasets with different attributes to evaluate LLMs. 

\subsection{Convergence of \cref{table:full_comparison}}

In \cref{fig:errors_full} we show the whole trajectory of each training that we filtered to create \cref{table:full_comparison}. These results show how our WCE metric, defined in \cref{eq:wce}, is reduced under each algorithm across multiple attributes that are commonly used to evaluate LLMs. We compare \algk, \algu, and Uniform allocation of compute. Our results show that in all scenarios \algu\; performs as expected.

\begin{figure*}[t]
	\centering
	\begin{minipage}{0.24\textwidth}
		\centering
		\includegraphics[width=\textwidth]{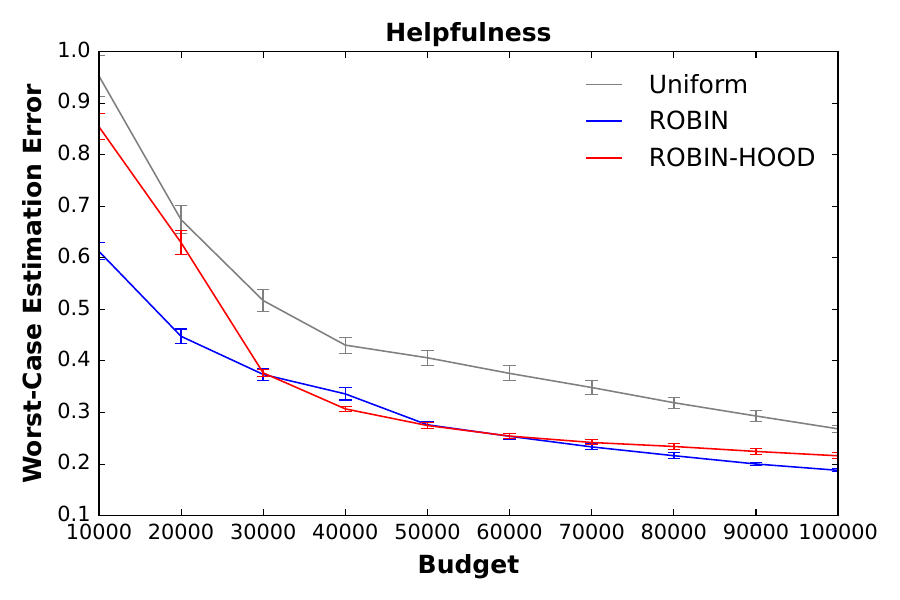}
	\end{minipage}
	\hfill
	\begin{minipage}{0.24\textwidth}
		\centering
		\includegraphics[width=\textwidth]{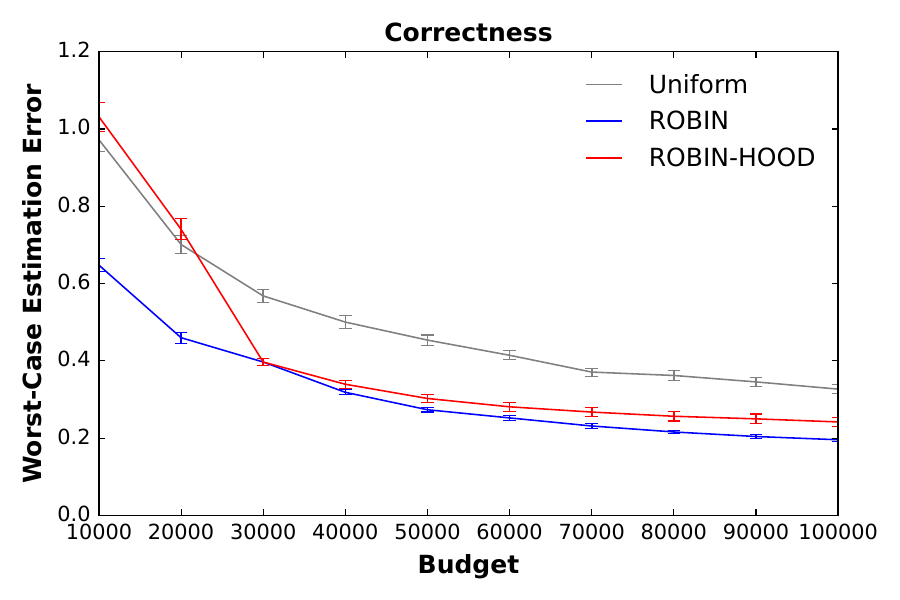}
	\end{minipage}
	\hfill
	\begin{minipage}{0.24\textwidth}
		\centering
		\includegraphics[width=\textwidth]{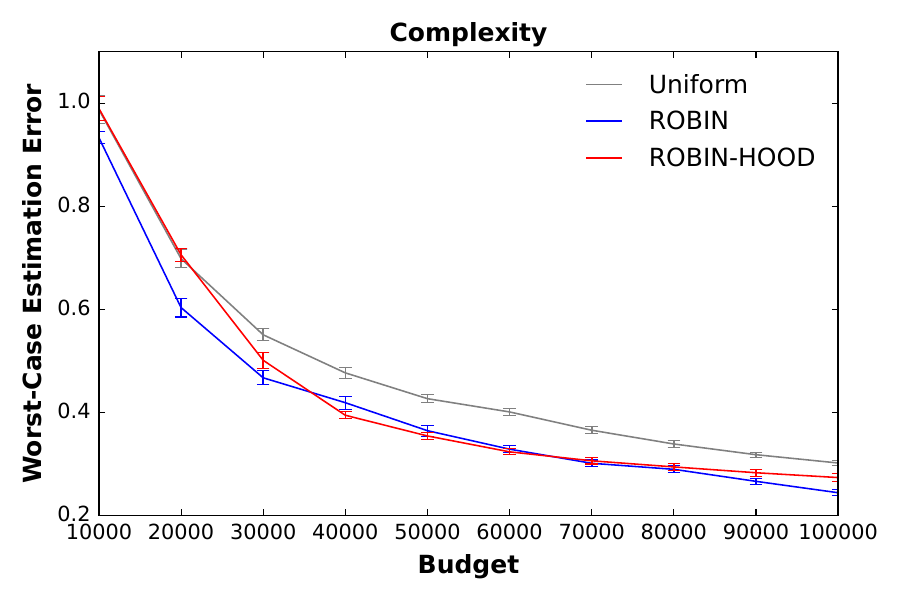}
	\end{minipage}
	\hfill
	\begin{minipage}{0.24\textwidth}
		\centering
		\includegraphics[width=\textwidth]{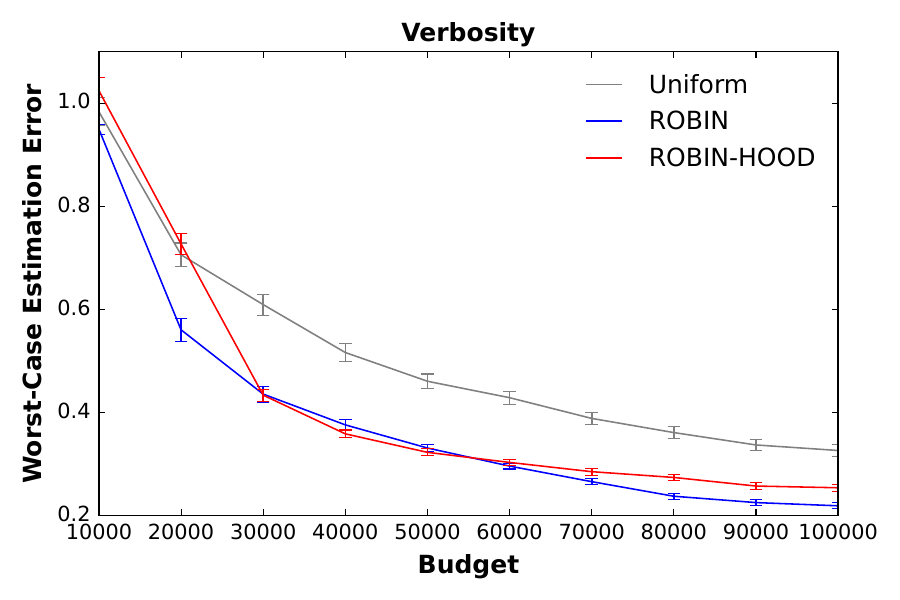}
	\end{minipage}
	
	
	\begin{minipage}{0.24\textwidth}
		\centering
		\includegraphics[width=\textwidth]{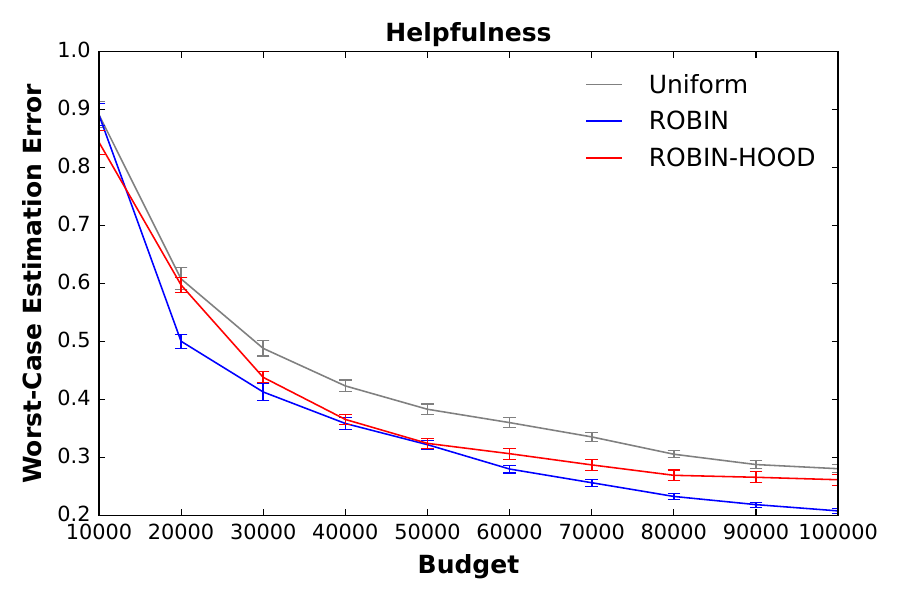}
	\end{minipage}
	\hfill
	\begin{minipage}{0.24\textwidth}
		\centering
		\includegraphics[width=\textwidth]{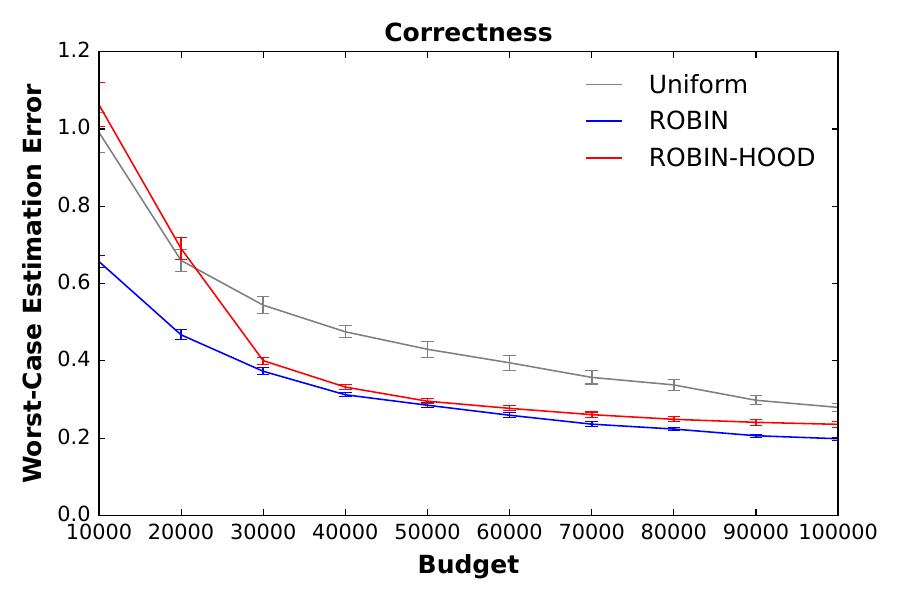}
	\end{minipage}
	\hfill
	\begin{minipage}{0.24\textwidth}
		\centering
		\includegraphics[width=\textwidth]{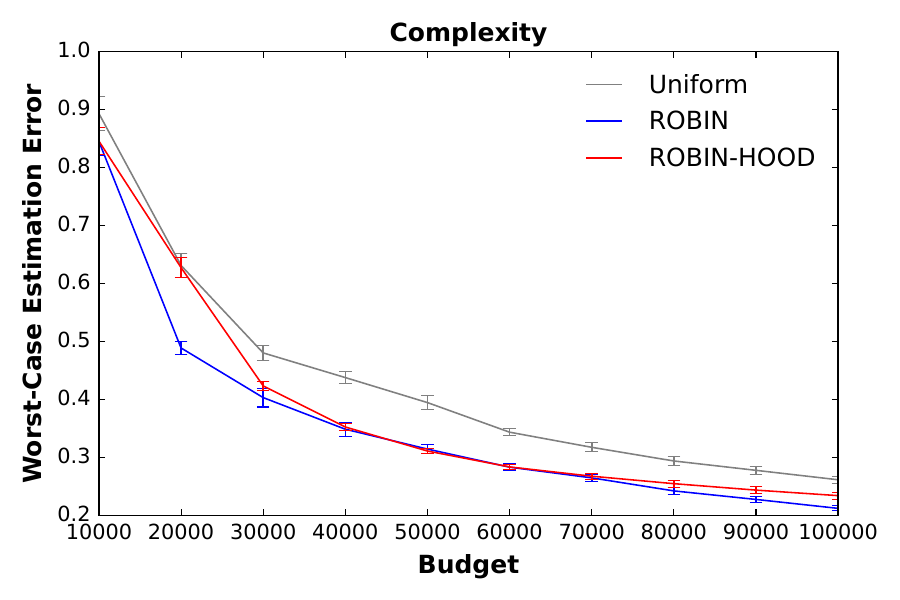}
	\end{minipage}
	\hfill
	\begin{minipage}{0.24\textwidth}
		\centering
		\includegraphics[width=\textwidth]{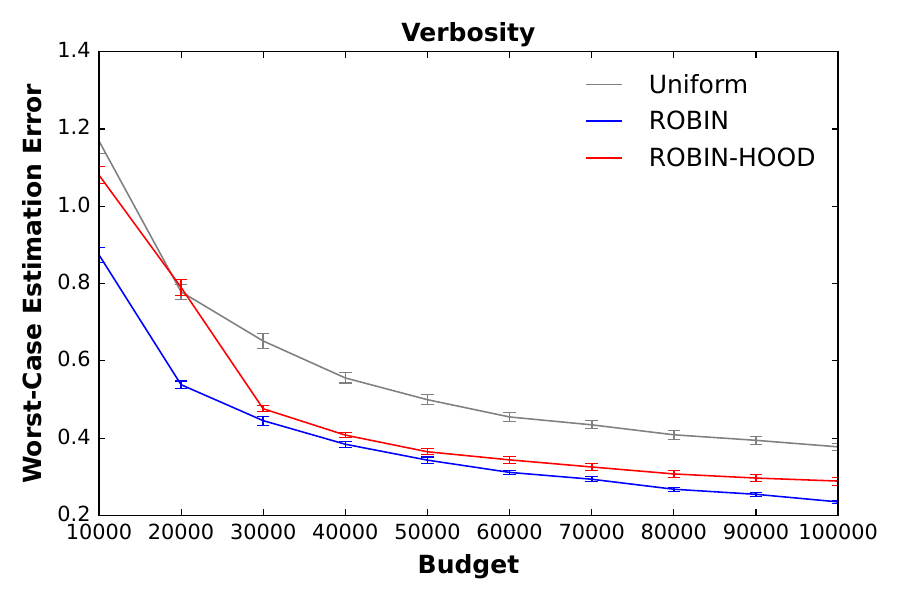}
	\end{minipage}
	
	\caption{y-axis: Maximum error (WCE). Top row: GPT-4.1 nano, $\delta$=0.007, warm-up period: 20164 samples. Bottom row: Llama 3.1 8B instruct, $\delta$=0.007, warm-up period: 20164 samples.}
	\label{fig:errors_full}
\end{figure*}

\subsection{Full Correlation with Human Ratings}

In \cref{fig:combined_correlations}, we extend \cref{fig:correlation_gpt}, \cref{fig:correlation_llama} to include all attribute datasets evaluated by GPT-nano and Llama 3.1. These graphs plot the correlation between human evaluations of prompt response pairs and judge-LLM using \cref{alg:algu} estimates of ratings of the same prompt response pairs. 

We compare three metrics, Pearson's coefficient, Spearman's coefficient, and Kendall's tau, over all attributes and both models. Here, we show how the judge-LLM's empirical estimates for prompt-response pairs highly correlate with human ratings for the same prompt-response pairs.

\begin{figure*}[t]
	\centering
	\begin{minipage}{0.24\textwidth}
		\centering
		\includegraphics[width=\textwidth]{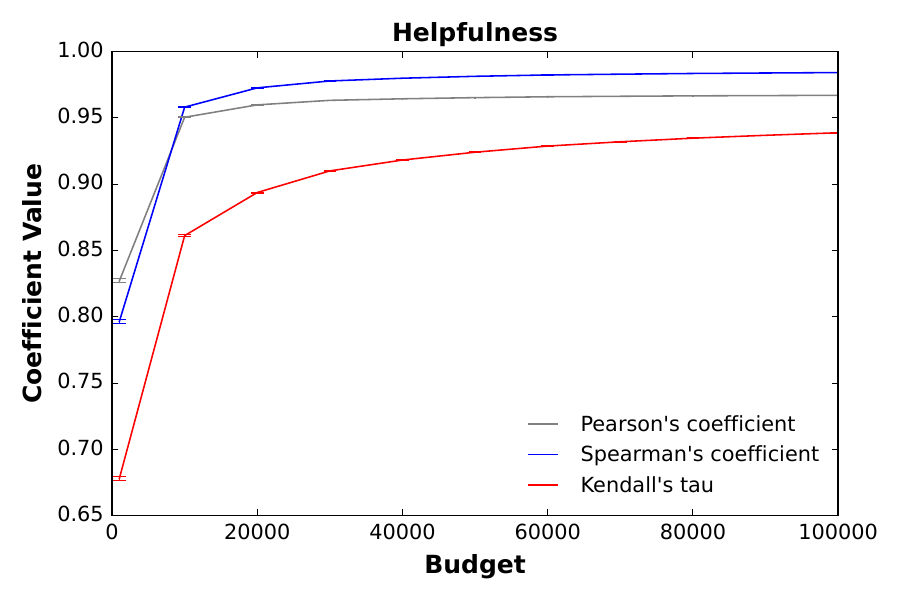}
	\end{minipage}
	\hfill
	\begin{minipage}{0.24\textwidth}
		\centering
		\includegraphics[width=\textwidth]{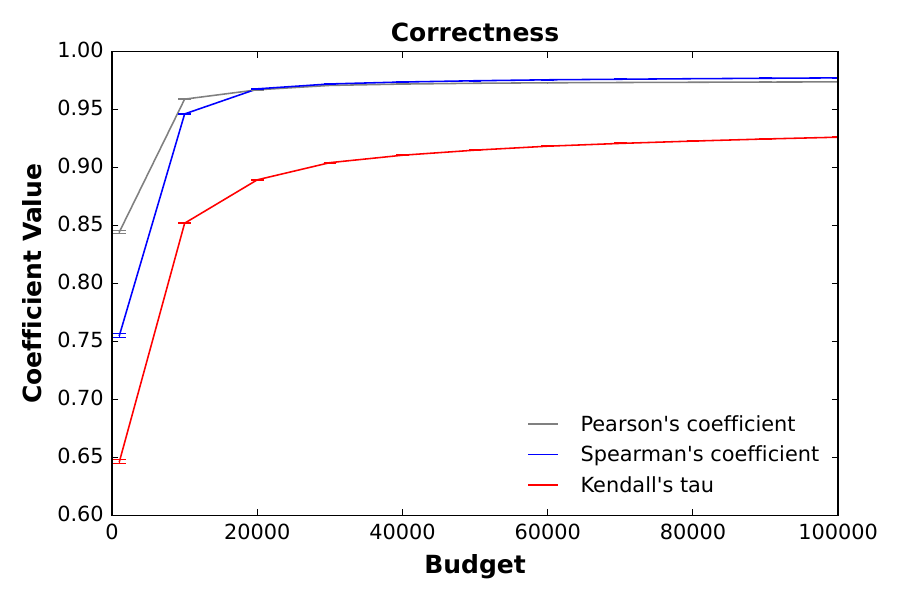}
	\end{minipage}
	\hfill
	\begin{minipage}{0.24\textwidth}
		\centering
		\includegraphics[width=\textwidth]{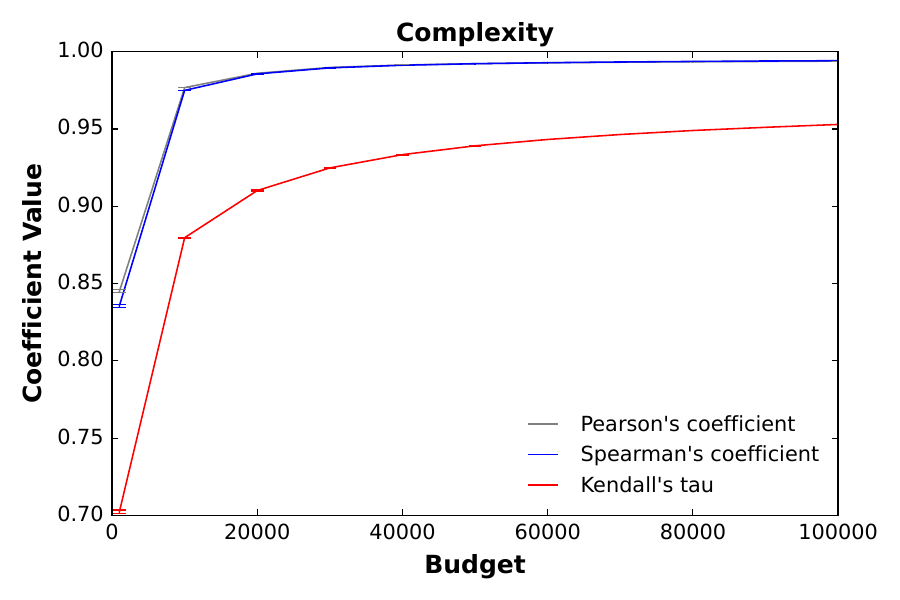}
	\end{minipage}
	\hfill
	\begin{minipage}{0.24\textwidth}
		\centering
		\includegraphics[width=\textwidth]{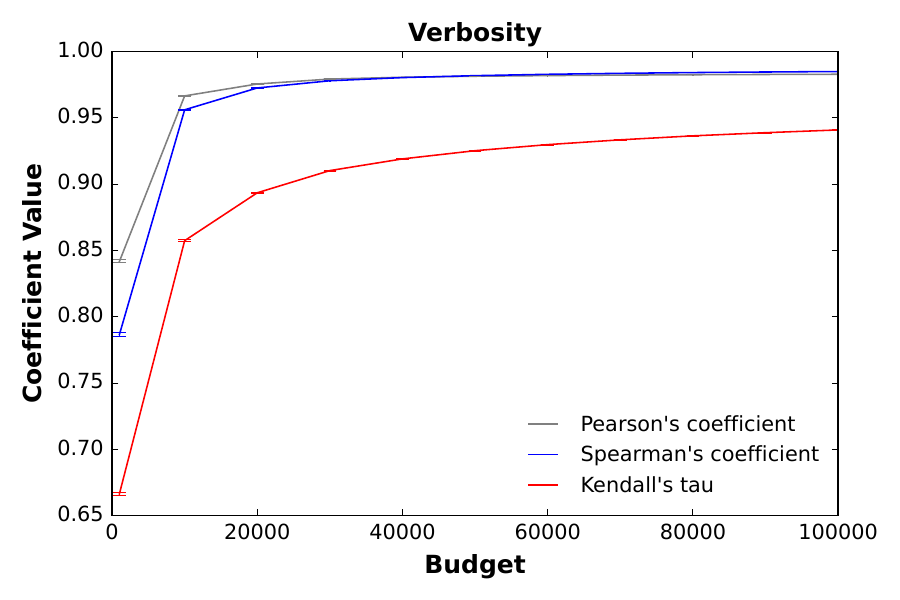}
	\end{minipage}
	
	
	\begin{minipage}{0.24\textwidth}
		\centering
		\includegraphics[width=\textwidth]{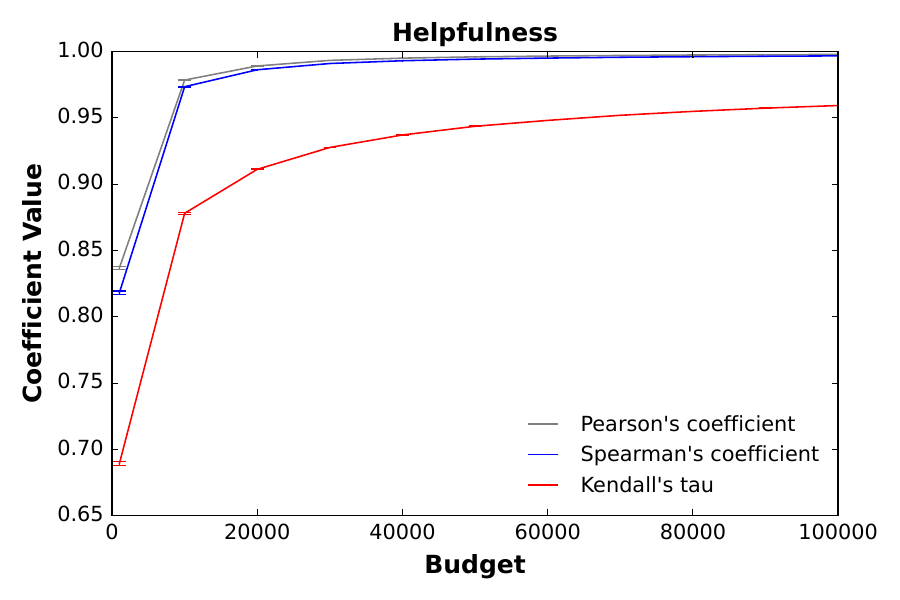}
	\end{minipage}
	\hfill
	\begin{minipage}{0.24\textwidth}
		\centering
		\includegraphics[width=\textwidth]{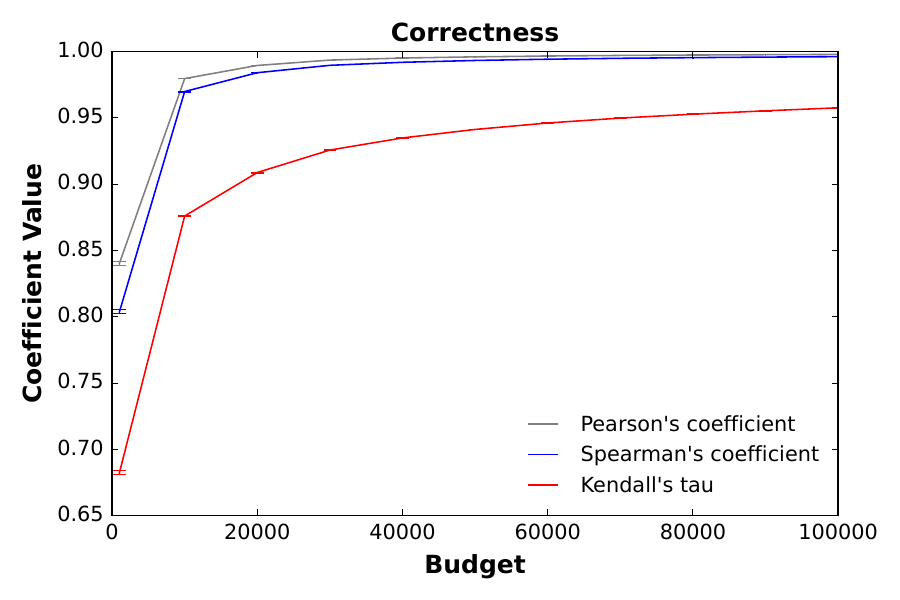}
	\end{minipage}
	\hfill
	\begin{minipage}{0.24\textwidth}
		\centering
		\includegraphics[width=\textwidth]{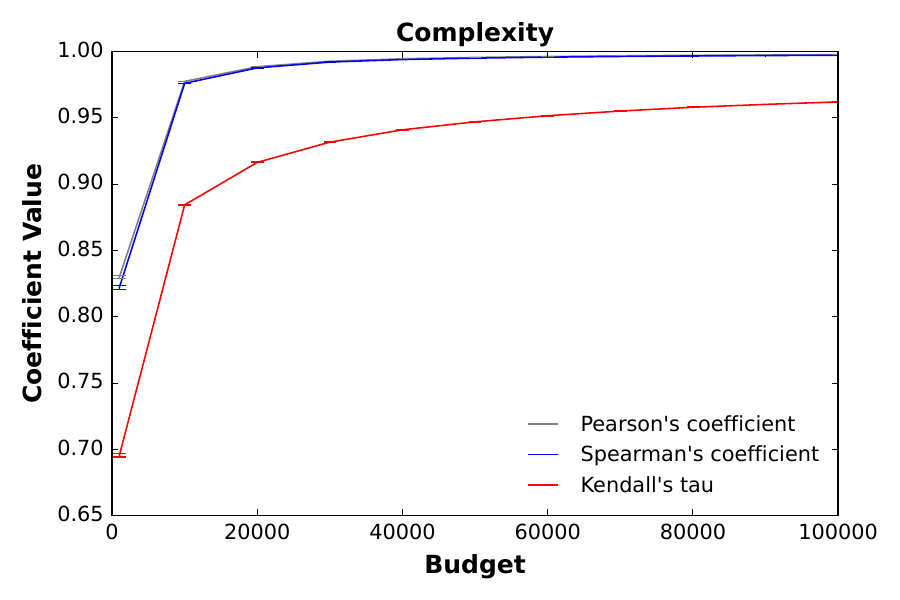}
	\end{minipage}
	\hfill
	\begin{minipage}{0.24\textwidth}
		\centering
		\includegraphics[width=\textwidth]{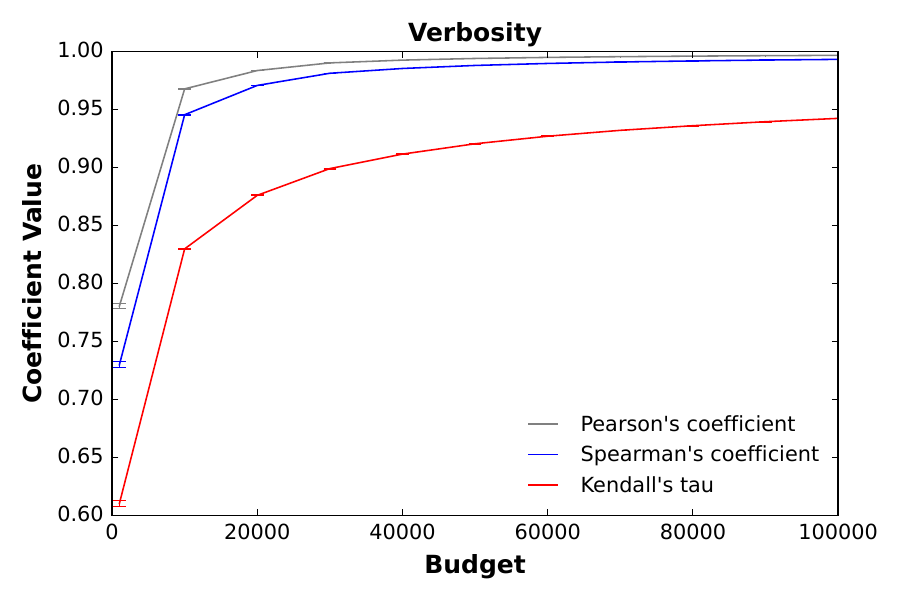}
	\end{minipage}
	
	\caption{y-axis: Correlations with Human Ratings. Top row: GPT-4.1 nano, $\delta$=0.007, warm-up period: 20164 samples. Bottom row: Llama 3.1 8B instruct, $\delta$=0.007, warm-up period: 20164 samples. Legend: (Red - Pearson's coefficient, Blue - Spearman's coefficient, Uniform - Kendall's Tau)}
	\label{fig:combined_correlations}
\end{figure*}

%% file: appendix_prompts.tex
\subsection{Prompts}
This section details the exact prompts used to generate our dataset described in \cref{sec:exp_setup}. 

\begin{tcolorbox}[colback=gray!5, colframe=black!50, title=Absolute LLM judge prompt, fonttitle=\bfseries, left=1pt, right=1pt, top=1pt, bottom=1pt, boxsep=4pt]
\scriptsize

\textbf{System Prompt:} You are a fair judge tasked with providing clear and objective feedback based on specific criteria. \\

\#\#\#Task description:
An instruction (might include an input inside it), a response to evaluate, and a score rubric representing an evaluation criterion are given.\newline
1. Write detailed feedback that assesses the quality of the response strictly based on the given score rubric, not evaluating in general.\newline
2. After writing the feedback, write a score that is an integer between 0 and 4. You should refer to the score rubric.\newline
3. The output should contain two lines in exactly this order:\newline
- Feedback: Detailed feedback that assesses the quality of the response\newline
- Rating: Score of the response\newline
4. Please do not generate any other opening, closing, and explanations.\\

\#\#\#Instruction to evaluate: 
\{instruction\} \\

\#\#\#Response to evaluate:
\{response\} \\

\#\#\#Score rubrics:
\{attributes\} \\

\#\#\#Output:
\end{tcolorbox}

The \{instruction\} and \{response\} are taken directly from prompt-response pairs in the Helpsteer2 dataset \citep{wang2024helpsteer2}. While the \{attributes\} are replaced with the corresponding attribute text from \cref{tab:attributes}. 

Using this method, we took 1000 prompt-response pairs from Helpsteer2 and created 30 judge-LLM evaluations using each Llama 3.1 8B instruct, and GPT-4.1 nano. Creating a total of 60K datapoints for each attribute. Thus, in total, we created and evaluated our algorithm using 240K samples of LLMs acting as judges.

\begin{table}[t!]
    \tiny
    \centering
    \begin{tabular}{lp{9.5cm}}
        \toprule
        Attribute & Attribute text\\
        \midrule
        Helpfulness & Helpfulness:\newline 
Score 0: The response is not useful or helpful at all. The response completely missed the essence of what the user wanted.\newline 
Score 1: The response is borderline unhelpful and mostly does not capture what the user was looking for, but it is still usable and helpful in a small way.\newline 
Score 2: The response is partially helpful but misses the overall goal of the user's query/input in some way. The response did not fully satisfy what the user was looking for.\newline 
Score 3: The response is mostly helpful and mainly aligned with what the user was looking for, but there is still some room for improvement.\newline 
Score 4: The response is extremely helpful and completely aligned with the spirit of what the prompt was asking for.\\
        \midrule
        Correctness & Correctness:\newline 
Score 0: The response is completely incorrect. All information provided is wrong, false or hallucinated. If the prompt asks the assistant to do a task, the task is not at all attempted, or the wrong task was attempted in the response. The response is completely irrelevant to the prompt.\newline 
Score 1: The response has some correct elements but is mostly wrong or incomplete. The response may contain multiple instances of hallucinations, false information, misleading information, or irrelevant information. If the prompt asks the assistant to do a task, the task was attempted with a small amount of success.\newline 
Score 2: The response contains a mix of correct and incorrect information. The response may miss some details, contain misleading information, or minor hallucinations, but is more or less aligned with what the prompt asks for. If the prompt asks the assistant to perform a task, the task is attempted with moderate success but still has clear room for improvement.\newline 
Score 3: The response is mostly accurate and correct with a small amount of missing information. It contains no misleading information or hallucinations. If the prompt asks the assistant to perform a task, the task is mostly successfully attempted.\newline 
Score 4: The response is completely correct and accurate to what is requested by the prompt with no necessary details missing and without false, misleading, or hallucinated information. If the prompt asks the assistant to do a task, the task is completely done and addressed in the response.\\
        \midrule
        Complexity & Complexity:\newline 
Score 0: (Basic) – The response uses very easy to understand language that is clear and completely interpretable by children, adults, and anyone with a functional command of the language.\newline 
Score 1: (Simple) – The response uses relatively straightforward language and wording, but some schooling through elementary or a middle school in the language might be required to understand the response.\newline 
Score 2: (Intermediate) – People who have completed up through a high school education will probably be able to understand the vocabulary and sentence structure used, but those at the basic level or children might struggle to understand the response.\newline 
Score 3: (Advanced) – The response uses a fairly sophisticated vocabulary and terminology. Someone majoring in this subject at a college or university could have written it and would understand the response. An average adult who does not work or study in this area could not have written the response.\newline 
Score 4: (Expert) – An expert in the field or area could have written the response. It uses specific and technically relevant vocabulary. Elevated language that someone at the simple or basic level may not understand at all. The professional language of a lawyer, scientist, engineer, or doctor falls into this category.\\
        \midrule
        Verbosity & Verbosity:\newline 
Score 0: (Succinct) – The response is short, to the point, and the most concise it can be. No additional information is provided outside of what is requested by the prompt (regardless of if the information or response itself is incorrect, hallucinated, or misleading. A response that gives an incorrect answer can still be succinct.).\newline 
Score 1: (Pretty Short) – The response is on the shorter side but could still have words, details, and/or text removed before it's at a bare minimum of what the response is trying to convey.\newline 
Score 2: (Average Length) – The response isn't especially long or short given what the prompt is asking of the model. The length is adequate for conveying a full response but isn't particularly wordy nor particularly concise.\newline 
Score 3: (Moderately Long) – The response is on the longer side but could still have more added to it before it is considered fully detailed or rambling.\newline 
Score 4: (Verbose) – The response is particularly lengthy, wordy, and/or extensive with extra details given what the prompt requested from the assistant model. The response can be verbose regardless of if the length is due to repetition and incoherency or if it is due to rich and insightful detail.\\
        \bottomrule \\
    \end{tabular}
    \caption{Attribute text for querying the Judge-LLM}
    \label{tab:attributes}
\end{table}